\def\eqref#1{equation~\ref{#1}}
\def\1{\bm{1}}
\DeclareMathAlphabet{\mathsfit}{\encodingdefault}{\sfdefault}{m}{sl}
\SetMathAlphabet{\mathsfit}{bold}{\encodingdefault}{\sfdefault}{bx}{n}
\newcommand{\softmax}{\mathrm{softmax}}
\newcommand{\revised}[1]{{{#1}}}
\newif\ifdraft\drafttrue
\newcommand\anthony[1]{{\color{blue}
\small [#1 - \textbf{Anthony}]}}
\newcommand\georg[1]{{\color{brown}
\small [#1 - \textbf{Georg}]}}
\newcommand\anthony[1]{}
\newcommand\georg[1]{}
\newtheorem{theorem}{Theorem}[section]
\newtheorem{proposition}[theorem]{Proposition}
\newtheorem{lemma}[theorem]{Lemma}
\newtheorem*{question}{Research Question}
\newtheorem*{example}{Example}
\theoremstyle{definition}
\newtheorem*{remark}{Remark}
\newtheorem{definition}[theorem]{Definition}
\crefname{theorem}{Theorem}{Theorems}
\crefname{lemma}{Lemma}{Lemmas}
\crefname{proposition}{Proposition}{Propositions}
\crefname{corollary}{Corollary}{Corollaries}
\crefname{definition}{Definition}{Definitions}
\Crefname{theorem}{Theorem}{Theorems}
\Crefname{lemma}{Lemma}{Lemmas}
\Crefname{proposition}{Proposition}{Propositions}
\Crefname{definition}{Definition}{Definitions}
\newcommand\OMIT[1]{}{}
\newcommand{\ialphabet}{\Sigma}
\newcommand{\SQRT}{\mathsf{SQRT}}
\newcommand{\bt}{\bm{t}}
\newcommand{\ba}{\bm{a}}
\newcommand{\bbb}{\bm{b}}
\newcommand{\bk}{\bm{k}}
\newcommand{\bq}{\bm{q}}
\newcommand{\bx}{\bm{x}}
\newcommand{\by}{\bm{y}}
\newcommand{\bu}{\bm{u}}
\newcommand{\bv}{\bm{v}}
\newcommand{\bw}{\bm{w}}
\newcommand{\be}{\bm{e}}
\newcommand{\bzero}{\bm{0}}
\newcommand{\lta}{\mathtt{a}}
\newcommand{\ltb}{\mathtt{b}}
\newcommand{\ltc}{\mathtt{c}}
\newcommand{\ltd}{\mathtt{d}}
\newcommand{\lte}{\mathtt{e}}
\newcommand{\ltf}{\mathtt{f}}
\newcommand{\N}{\mathbb{N}}
\newcommand{\R}{\mathbb{R}}
\newcommand{\Z}{\mathbb{Z}}
\newcommand{\Q}{\mathbb{Q}}
\newcommand{\autA}{\mathfrak{A}}
\newcommand{\autB}{\mathfrak{B}}
\newcommand{\emb}{\iota}
\DeclareMathOperator{\ReLU}{ReLU}
\newcommand{\LTL}{\mathsf{LTL}}
\newcommand{\Count}{\mathsf{Count}}
\newcommand{\PARITY}{\mathsf{PARITY}}
\newcommand{\QFPA}{\mathsf{QFPA}}
\newcommand{\Parikh}{\Psi}
\newcommand{\MAJ}{\mathsf{MAJ}}
\newcommand{\X}{\operatorname{X}}
\newcommand{\U}{\operatorname{U}}
\mathchardef\mhyphen="2D 
\DeclareDocumentCommand{\AHATem}{o}{\IfNoValueTF{#1}{\mathsf{NoPE\mhyphen AHAT}}{\mathsf{NoPE\mhyphen AHAT}[\mathsf{#1}]}}
\DeclareDocumentCommand{\SMATem}{o}{\IfNoValueTF{#1}{\mathsf{NoPE\mhyphen SMAT}}{\mathsf{NoPE\mhyphen SMAT}[\mathsf{#1}]}}
\DeclareDocumentCommand{\AHATnem}{o}{\IfNoValueTF{#1}{\mathsf{NoPE\mhyphen AHAT}^{\neg\mathsf{em}}}{\mathsf{NoPE\mhyphen AHAT}^{\neg\mathsf{em}}[\mathsf{#1}]}}
\DeclareDocumentCommand{\PEAHAT}{o}{\IfNoValueTF{#1}{\mathsf{AHAT}}{\mathsf{AHAT}[\mathsf{#1}]}}
\newcommand{\SMAT}{\mathsf{SMAT}}
\newcommand{\AHAT}{\mathsf{AHAT}}
\newcommand{\SemiAlg}{\mathsf{SemiAlg}}
\newcommand{\sem}[1]{\llbracket #1\rrbracket}
\DeclareDocumentCommand{\aha}{o}{\IfNoValueTF{#1}{\mathrm{aha}}{\mathrm{aha}[\mathrm{#1}]}}
\newcommand{\weight}{\texttt{wt}}
\newcommand{\RE}{\mathsf{RE}}
\newcommand{\PI}{\mathsf{PI}}
\newcommand{\Proj}{\mathsf{Proj}}
\newcommand{\Lang}{L}
\newcommand{\defn}[1]{\emph{#1}}
\title{The Counting Power of Transformers}
\author{%
  Marco Sälzer\\
  RPTU Kaiserslautern-Landau\\
  Kaiserslautern, Germany\\
  \texttt{marco.saelzer@rptu.de}\\
  \And
  Chris Köcher\\
  MPI-SWS\\
  Kaiserslautern, Germany\\
  \texttt{ckoecher@mpi-sws.org}\\
  \And
  Alexander Kozachinskiy\\
  Centro Nacional de Inteligencia Artificial\\
  Santiago, Chile\\
  \texttt{alexander.kozachinskyi@cenia.cl}\\
  \And
  Georg Zetzsche\\
  MPI-SWS\\
  Kaiserslautern, Germany\\
  \texttt{georg@mpi-sws.org}
  \And
  Anthony Widjaja Lin\\
  MPI-SWS and RPTU Kaiserslautern-Landau\\
  Kaiserslautern, Germany\\
  \texttt{awlin@mpi-sws.org}\\
}
\newcommand{\sparagraph}[1]{\vspace{-0.3cm}\paragraph{#1}}
\begin{document}

\maketitle


\begin{abstract}
    Counting properties (e.g. determining whether certain tokens occur more
    than other tokens in a given input text) have played a significant role in 
    the study of expressiveness of transformers. In this paper, we provide a 
    formal 
    framework for investigating the counting power of transformers. We argue 
    that all existing results demonstrate transformers' expressivity only for 
    (semi-)linear counting properties, i.e., which are expressible as a 
    boolean combination of linear inequalities. 
    Our main result is that transformers can express counting properties that
    are highly nonlinear. More precisely, we prove that transformers can
    capture all semialgebraic counting properties, i.e., expressible as 
    a boolean combination of arbitrary multivariate polynomials (of any degree).
    Among others, these generalize the counting properties that
    can be captured by C-RASP softmax transformers, which capture only
    linear counting properties.

    To complement this result, we exhibit a natural subclass of (softmax) 
    transformers that completely characterizes semialgebraic counting 
    properties. 
    Through connections with the
    Hilbert's tenth problem, this expressivity of transformers also 
    yields a new undecidability result for analyzing an extremely simple 
    transformer model --- surprisingly with neither positional encodings 
    (i.e. NoPE-transformers) nor masking.
    We also experimentally validate trainability of such counting
    properties.
\end{abstract}

\section{Introduction}\label{sec:introduction}

Transformers \citep{Vaswani17} have emerged in recent years as a powerful model
with a plethora of successful applications including (among others) natural 
language processing, computer vision, and speech recognition. Despite the
success of transformers, the question of what transformers can express is still
not well-understood and has in recent years featured in a rich body of 
research works (e.g.\ \cite{transformers_survey,hahn20,perez,HAF22}). 
In particular, formal language theory provides a formal framework in
understanding expressivity issues for sequential models like transformers and
Recurrent Neural Networks (RNNs).

One recurring theme when studying the expressibility of transformers is the
\emph{counting power} of transformers. Intuitively, counting amounts to 
asserting an arithmetic relationship between the numbers of occurrences 
of various tokens in a given text. Counting properties are essentially the class
of properties for textual data under consideration in the well-known 
\emph{Vector Space Model (VSM)} (cf.\ \cite{vsm,gvsm,kernel-book}),
or the similar \emph{Bag-of-Words (BoW)} 
model \citep{bow}, \revised{which are known from the information retrieval
community to be surprisingly powerful in measuring text similarity (e.g. see
\cite{SLY19,kernel-book}).}
A simple example of a counting property 
can be found in a sentiment analysis application\footnote{\url{https://medium.com/data-science/sentiment-analysis-with-text-mining-13dd2b33de27}
}: the number of positive words exceeds the number of negative words in a text. 
In the formal language theory, such a counting property can be 
formalized as the following language 
\begin{equation}
	\MAJ := \{ w \in \{a,b\}^* : |w|_a > |w|_b \}, \label{definition-maj}
\end{equation}
which is often referred to as \emph{majority}. Here, $|w|_a$ (resp. $|w|_b$)
refers to the number of occurrences of $a$ (resp. $b$) in the string $w$. 
For example, $\lta\lta\ltb \in \MAJ$ but $\lta\ltb\ltb \notin \MAJ$.
Note that ``tokens'' in NLP are synonymous to ``letters'' in formal language
theory. Another counting
property that plays an important role in the theory of expressibility of
transformers is \emph{parity} language: 
\begin{equation}
    \PARITY := \{ w \in \{a,b\}^* : a \text{ occurs an even number of times in
	} w\}.\label{definition-parity}
\end{equation}
Multiple theoretical and empirical results (e.g.\ 
\cite{sensitivity,CC22,framework,hahn20,HAF22,BAG20,anil22,chomsky}) have shown 
that, while transformers can be efficiently trained for $\MAJ$, this is not the 
case for $\PARITY$. Several theoretical explanations have been offered, e.g.,
\emph{sensitivity} by \cite{sensitivity} and length generalization admitted by 
limit transformers by \cite{framework}). 

Thus far, existing results have touched only upon \emph{semilinear} counting 
properties. For example, defining $\MAJ$ requires only a linear inequality 
(i.e.\ $|w|_a > |w|_b$). In fact, logical languages, which were devised by 
\cite{barcelo2023logical,YC24,framework} epitomizing languages expressible
by transformers, permit only linear expressions (e.g.\ $|w|_a + |w|_b > 
2\cdot|w|_c$). However, polynomial expressions (cf.\ \cite{kernel-book}) are also 
used to express \emph{co-occurrence} of terms/tokens in a text. For example, 
using a \emph{higher-degree} monomial such as
\[
    \#(\text{nvidia}) \cdot \#(\text{intel}) \cdot \#(\text{deal}),
\]
where $\#(w)$ counts the number of occurrences of a word $w$ in the text,
one can emphasize the co-occurrence of ``nvidia'', ``intel'' and ``deal'' in a 
text. This motivates the following question:
\begin{question}
    What counting properties are expressible on transformers? Can they express
    nonlinear counting properties?
\end{question}

The main contribution of this paper is the following result.
\begin{theorem} 
    Transformers can capture all semialgebraic counting properties, i.e.,
    those expressible as a boolean combination of inequalities between
    multivariate polynomials, where each variable counts the number of 
    occurrences of a specific token in the text.
    \label{th:semialgebra-informal}
\end{theorem}
This means that transformers can capture expressions involving higher-degree
polynomials like $7\#(\text{nvidia}) \cdot \#(\text{intel}) \cdot 
\#(\text{deal}) + 2\#(\text{shares}) - 8\#(\text{war}) > 10$,
or boolean combinations (i.e. unions/intersections) of similar polynomial 
expressions. Consequently, by the Weierstrass theorem it follows that the set of
polynomials can also approximate any continuous function on the number of 
occurrences of tokens. We prove this theorem (using 
softmax transformers) --- requiring the use of neither positional encodings 
nor positional masking --- and experimentally validate this claim.

Our next question concerns the expressivity
of softmax transformers for capturing counting properties: \emph{which class of 
softmax transformers capture semialgebraic counting properties?} To this end, we
provide a surprising characterization involving \emph{average hard
attention} \citep{HAF22,perez}, which was devised to ``approximate'' soft 
attention by attending to all positions with maximum attention score and 
forwarding their average. In particular, Average Hard Attention Transformers
(AHATs) with only \emph{uniform layers} (written AHAT[U]) --- that is, where
maximum attention score is achieved at every position --- immediately form a 
subclass of SoftMax Attention Transformers (SMAT). In the sequel, we write
$\AHATem$ (resp. $\AHATem[U]$) to mean AHAT (resp. AHAT[U]) that do not use
Positional Encodings (PEs) (also no positional masking).
\begin{theorem}\label{main-result-semialgebraic}
	$\AHATem$ and $\AHATem[U]$ capture precisely semialgebraic counting 
    properties.  In particular, as far as expressing counting properties, 
    $\AHATem$ is a subset of $\SMAT$.
\end{theorem}
This is surprising, since it is still a major
open problem whether AHAT are captured by SMAT \citep{YC24,hahn20,simulating}
for general (not necessarily counting) properties. 

A corollary of Theorem \ref{th:semialgebra-informal}, combined with
Matiyasevich's celebrated solution to the notorious Hilbert's 10th Problem 
\citep{Mat93}, is a kind of \emph{universality} (i.e.
Turing-completeness) of transformers. More precisely, 
any recursively enumerable counting property $P\subseteq\ialphabet^*$ can be represented in terms of a 
program that, given an input string $w \in \ialphabet^*$, feeds each string $wv$ 
(where $v \in \Gamma^*$, for some $\Gamma \cap \ialphabet = \emptyset$) into
a transformer $T$ and accepts if $T$ accepts some $wv$. In this case, we say that $P$ is a \emph{projection} of the language accepted by $T$.
In fact, we show that transformers $T$ with only two attention layers are
sufficient and necessary to achieve this result:
\begin{theorem}\label{main-result-universality}
Every recursively enumerable counting property is a projection of a language
	recognized by a $\AHATem[U]$, and thus by an $\SMAT$. Here, two attention
	layers in $\AHATem[U]$ and $\SMAT$ are sufficient.
\end{theorem}
Similarly, our results yield an undecidability result for
analyzing an extremely simple transformer model---surprisingly with neither
positional encodings 
nor masking:
\begin{theorem}\label{main-result-undecidability}
	Given a $\AHATem[U]$ or $\SMAT$ (with just two attention layers), it is
undecidable whether its language is empty.
\end{theorem}
Recent results (cf. \citep{SAL25}) require a substantially
more complex architecture to
achieve such an undecidability result, i.e., with powerful positional encoding
and average hard attention.

Finally, \emph{how do general transformers compare with other machine learning models
as far as capturing counting properties?} To this end, let us discuss two
models. First is the class of polynomial separators that can be 
generated by mapping to a higher dimension and look for a linear separator
in this higher dimension. This is a standard technique in classical machine
learning literature, where one can apply techniques like Support Vector 
Machines (SVM) (e.g. using polynomial kernel) in the \emph{Vector 
Space Model (VSM)} (\cite{vsm,gvsm}; also see Chapter 10 of 
\cite{kernel-book}). Our result shows that transformers generalize such
counting properties: not only polynomial counting properties can
be captured, but also \emph{boolean combinations} thereof. 
Second is the model called C-RASP \citep{framework}, which is a simple 
declarative language that formalizes the so-called \emph{RASP-L conjecture} 
\citep{raspl} capturing ``efficiently learnable'' properties on transformers. 
In particular, C-RASP allows only linear counting terms. We
prove that C-RASP can capture \emph{only} linear counting properties. Since our 
experiments supporting 
Theorem \ref{th:semialgebra-informal} 
reveals that counting properties like $L_k := \{ w \in \{a,b\}^+ : |w|_a^k 
\geq |w|_b \}$
are also efficiently learnable for $k \geq 2$, it follows that C-RASP
is only a partial characterization of efficiently learnable properties.

\begin{figure}
  \begin{center}
%
%
%

\begin{tikzpicture}
  \draw[fill=gray!20,draw=none] (0,1.5) [rounded corners] -- (0,2)  -- (3,2) [sharp corners] -- (3,1.5) -- cycle;
  \draw[fill=gray!20,draw=none] (3.75,1.5) [rounded corners] -- (3.75,2) -- (6.5,2) [sharp corners] -- (6.5,1.5) -- cycle;
  
  \draw[rounded corners] (0,0) rectangle (3,2);
  \draw[rounded corners] (3.75,0) rectangle (6.5,2);
  \draw[rounded corners] (7.25,0) rectangle (9.25,2);
  \draw[rounded corners] (10,0) rectangle (11.5,0.95);
  \draw[rounded corners] (10,1.05) rectangle (11.5,2);
  
  \node[align=center] at (1.5,0.75) {$\AHATem[\leq1]$\\$\QFPA$};
  \node[align=center] at (5.125,0.75) {$\AHATem$\\$\AHATem[U]$\\$\SemiAlg$};
  \node[align=center] at (8.25,1) {$\PEAHAT[U]$};
  \node[align=center] at (10.75,0.5) {$\SMAT$};
  \node[align=center] at (10.75,1.5) {$\PEAHAT$};
  
  \node at (3.375,1) {$\subsetneq$};
  \node[align=center] at (6.875,1) {$\subseteq$};
  \node[align=center] at (9.625,0.5) {$\subseteq$};
  \node at (9.625,1.5) {$\subseteq$};
  
  \node at (1.5,1.75) {\cref{result-end-marker-one-layer}};
  \node at (5.125,1.75) {\cref{main-result-semialgebraic}};
\end{tikzpicture}
  \end{center}
  \caption{\revised{Visualization of our results.\label{fig:results}}}
\end{figure}

\paragraph{Organization.} 
We recall transformer models and define our framework for studying counting
properties in \cref{sec:model}.
We then show how to capture semialgebraic counting properties using
transformers in \cref{sec:semialgebraic}.
In \cref{sec:characterization}, we 
provide a natural subclass of softmax transformers that completely 
characterizes semialgebraic counting properties. 
In \cref{sec:apps}, we show applications of our semialgebraic results for a
better understanding of expressiveness of transformers, e.g.,
universality/undecidability and comparison to work on C-RASP transformers.
We report our experimental results in \cref{sec:exps} and
conclude in \cref{sec:conc}. Some details have been relegated into the Appendix.

\section{Framework: Transformers and Counting Properties}\label{sec:model}

\paragraph{Formal language theory primer}
We assume some basic understanding of formal language theory
(at the level of a standard undergraduate textbook by \cite{sipser-book})
and will only fix some notation.

For an alphabet $\ialphabet=\{\lta_1,\ldots,\lta_m\}$. A \defn{language}
is a set of strings over $\ialphabet$. We write $\ialphabet^*$
(resp. $\ialphabet^+$) to mean the set of all strings (resp. all nonempty
strings) over $\ialphabet$. We write $|w|$ to denote the length of $w$. For each $a \in
\ialphabet$, we write $|w|_a$ to mean the number of occurrences of $a$ in $w$.
A language $K\subseteq\ialphabet^*$ is a \emph{projection} of a language $L\subseteq\ialphabet^*$ if there is a subalphabet $\Gamma\subseteq\ialphabet$ such that $K$ is obtained from $L$ by deleting all occurrences of letters in $\Gamma$ from words in $L$. For a class $\mathcal{C}$ of languages, by $\Proj(\mathcal{C})$, we denote the class of projections of languages in $\mathcal{C}$.

We will touch upon regular languages and recursively enumerable languages (see
\cite{sipser-book} for details). In summary, regular languages are languages
that can be described by regular expressions.  \emph{Recursively enumerable}
languages are those that are recognized by (possibly nonterminating) Turing
machines. The class of such languages is denoted $\RE$.  In particular, a
machine model is said to be \emph{Turing-complete} if it can capture all
recursively enumerable languages.

For an alphabet $\ialphabet=\{\lta_1,\ldots,\lta_m\}$, we define the
\emph{Parikh image} (a.k.a. \emph{Parikh map}) as the function $\Psi\colon\Sigma^*\to\N^m$, where $\Psi(w)[i]:=|w|_{\lta_i}$ is the number of $\lta_i$'s in $w$. 
Intuitively, Parikh image of a word $w$ provides the letter counts in $w$, e.g.,
over $\ialphabet = \{\lta,\ltb\}$, we have $\Psi(abaa) = (3,1)$. The Parikh map
can also be extended to a language $L$; that is, $\Psi(L) = \{ \Psi(w) : w \in 
L \} \subseteq \N^{|\ialphabet|}$. For example, if $L = \{ \lta^n\ltb^n\lta^n : 
n \geq 0\}$ is a language over $\ialphabet = \{\lta,\ltb\}$, we have $\Psi(L) = 
\{ (2n,n) : n \geq 0 \}$.

\subsection{Transformers}
We now recall the formal definition of transformers. Loosely speaking,
a transformer is a composition of finitely many attention layers, each
converting a sequence $\sigma$ of $\R^d$-vectors into another sequence 
$\sigma'$ of $\R^k$-vectors, for some $d$ and $k$. To turn a transformer $T$ into a language recognizer, we have to embed
any letter in the finite alphabet $\ialphabet$ as a $\R^d$-vector, where
$d$ is smaller than the dimension of the first attention layer. For example, 
$\ialphabet = \{\lta,\ltb,\ltc\}$, and the \emph{one-hot} embeddings of $\lta$, $\ltb$, $\ltc$
are (respectively) $(1,0,0)$, $(0,1,0)$, and $(0,0,1)$. Finally, to determine
acceptance, we simply run $T$ on the embeddings of the input string $w$ into
a sequence of vectors (possibly expanded with positional information) and check
if the last vector $\bv$ satisfies that the dot product $\bv.\bt$ is greater
than $0$ (for 
some pre-defined vector 
$\bt$ of weights). In particular, $w$ is accepted by $T$ iff $\bv.\bt > 0$.

\begin{example}
Suppose we are given the input string $w = \lta\ltb\lta\ltc$. Additionally, suppose we use
the positional embedding $p\colon n \mapsto 1/n$. Then, checking whether $T$ accepts
$w$ amounts to running $T$ on the sequence $\sigma$:
\[
    (1,0,0,1)(0,1,0,1/2)(1,0,0,1/3)(0,0,1,1/4).
\]
	After running $T$ on $\sigma$, the resulting sequence is of the form
    $\bv_1,\bv_2,\bv_3,\bv_4$.
Determining whether $T$ accepts $w$ amounts to checking whether $\bt.\bv_1 > 0$.
For example, $\bv_1,\bv_2,\bv_3,\bv_4$ could be:
    \[
        (1,1,7,1,1)(2,3,1,10,1/2)(1,8,0,8,1/3)(0,0,1,-1,1/4)
    \]
which will be accepted, whenever $t = (1,0,0,1,0)$.
\end{example}

Next we formalize the definition of transformers by defining how each attention
layer functions.

\paragraph{ReLU networks.}
We first define ReLU networks, which are used inside an attention layer.
A \emph{ReLU node} $v$ is a function $\Q^m \to \Q$, where $m \in \N$ is referred to as the input dimension, and is defined as
$v(x_1, \dotsc, x_m) = \max(0, b + \sum_{i=1}^n w_i x_i)$, where $w_i \in \Q$ are the \emph{weights}, and
$b \in \Q$ is the \emph{bias}. \revised{[In practice, GeLU and SwiGLU are also used
instead of ReLU, which we do not consider in this paper.]}
    A \emph{ReLU layer} $\ell$ is a tuple of ReLU nodes $(v_1, \dotsc, v_n)$, all having the same
input dimensionality, computing a function $\R^m \to \R^n$, where $n \in 
\N$ is referred to as the output dimension.
Finally, a \emph{ReLU network} $\mathcal{N}$ is a tuple of ReLU layers $(\ell_1, \dotsc, \ell_k)$, such that the input dimension of $\ell_{i+1}$ is equal to
the output dimension of $\ell_i$. It computes a function $\Q^{m_1} \to \Q^{n_k}$, given by
$\mathcal{N}(x_1, \dotsc, x_{m_1}) = \ell_k(\dotsb \ell_1(x_1, \dotsc, x_{m_1}) \dotsb )$.

\paragraph{Attention layers}
Each attention layer involves a \emph{weight normalizer} $\weight: 
\R^* \to \R^*$, which turns any $d$-sequence of weights into another 
such $d$-sequence. Two widely used weight normalizers are:
    \begin{enumerate}
        \item The softmax normalizer $\softmax$. That is, given a sequence 
            $\sigma = x_1,\ldots,x_n \in \R$, define $\softmax(\sigma) 
            := y_1,\ldots,y_n$, where
            $y_i := \frac{e^{x_i}}{\sum_{j=1}^n e^{x_j}}$.
        \item The averaging hard attention normalizer $\aha$. We define
            $\aha(\sigma) := y_1,\ldots,y_n$, where
            \[
                y_i := \left\{ \begin{array}{cc} 
                    1/|P| & \text{ if $x_i = \max(\sigma)$, } \\
                    0     & \text{ or else. }
                            \end{array}
                            \right.
            \]
            where $P$ consists of positions $i$ in $\sigma$ such that $x_i$
            is maximum in $\sigma$. That is, $\aha$ behaves like $\softmax$ 
            but maps all non-maximum weights to 0, and all maximum weights to 
            $1/|P|$.
    \end{enumerate} 
One can also allow a temperature scaling $\tau > 0$ to $\softmax$,
    i.e., $\softmax_{\tau}(\sigma) = y_1,\ldots,y_n$ and set
            $y_i := \frac{e^{x_i/\tau}}{\sum_{j=1}^n e^{x_j/ \tau}}$.
    This is not so relevant in our paper since our proof works for \emph{any} 
    $\tau > 0$.

%


    An \emph{attention layer} is a function $\lambda\colon(\R^d)^* \to(\R^e)^* $, given by affine maps $Q, K\colon \R^d\to\R^m$, $V\colon\R^d\to \R^k$ (query, 
    key, and value matrices) and a ReLU neural net
    $\mathcal{N}\colon\Q^{d+k}\to\Q^e$.
  Given an input sequence $x = (\bx_1, \ldots, \bx_n)\in(\Q^d)^n$, the output
    sequence  $y = (\by_1, \ldots, \by_n)\in(\Q^d)^n$ is computed as follows. First, one computes the sequences of key, query, and value vectors:
  $\bk_i = K \bx_i, \,\, \bq_i = Q \bx_i, \,\, \bv_i = V \bx_i$, for each 
    $i=1, \ldots, n$, then we define $\by_i = \mathcal{N}(\bx_i, \ba_i)$,
    with $\ba_i = \sum_{j=1}^n \bw(j)\bv_j$, where $\bw = \weight(\{\langle
    \bk_i,\bq_j\rangle\}_{j=1}^n)$. 

    We say that $\lambda$ is a \emph{softmax} (resp. \emph{aha}) layer if
    $\weight = \softmax$ (resp. $\aha$). We say that it is a 
    \emph{uniform-$\aha$} layer if it is an $\aha$ layer such that $K\bx = 
    Q\bx = \bzero$ for all $\bx$, i.e.,
    $\langle K\bx,Q\by\rangle = 0$ for all $\bx$ and $\by$. Note that a 
    uniform-$\aha$ is both an $\aha$ layer and a $\softmax$ layer since
    noting that 
    \[
        \softmax(s_1,\ldots,s_n)=\softmax_\tau(s_1,\ldots,s_n) = 
        \aha(s_1,\ldots,s_n) = [1/n, \cdots, 1/n], 
    \]
    whenever $s_1 = \cdots = s_n$, which can be guaranteed 
    for uniform $\aha$ layers. This holds for \emph{all} $\tau > 0$.

\begin{remark}
        Some papers (e.g.\ \cite{ACY24,framework,YC24}) apply \emph{strict future
        masking}, which means that attention is only applied to positions
        up to the current position $i$. Our work does not apply masking.
\end{remark}

\paragraph{Defining transformers.}
To define a transformer and its language, we first extend
the finite alphabet $\ialphabet$ with an 
\emph{end marker} $\$\notin\Sigma$. That is, $\Gamma := \Sigma \cup \{\$\}$.
A \emph{transformer} with $\ell$ layers over a finite alphabet $\ialphabet$
is then a function $T\colon\Sigma^+ \to \{0, 1\}$, given by:
  (i) the ``input embedding''  function $\emb\colon\Gamma\to\mathbb{Q}^{d_1}$,
  (ii) the positional encoding $p\colon\N^2\to\R^{d_1}$, and
  (iii) a sequence of layers $\lambda_1\colon(\R^{d_1})^*\to (\R^{d_2})^*, 
          \ldots, \lambda_\ell\colon (\R^{d_\ell})^*\to (\R^{d_{\ell+1}})^*$.
  Given an input word $w = a_1\cdots a_n\in\Sigma^n$, the output $T(w)$ is computed as follows. First, we set
$	  \bx_1 = \emb(a_1)+p(n+1,1),~ \ldots,~ \bx_n = \emb(a_n)+p(n+1,n), ~\bx_{n+1} =
\emb(\$)+p(n+1,n)$.
  Then we compute
$	  (\by_1, \ldots, \by_{n+1}) = \lambda_\ell(\lambda_{\ell-1}(\cdots
\lambda_1(\bx_1, \ldots, \bx_{n+1})\cdots))$,
	and we set $T(w) = 1$ if and only if $\by_n[1] > 0$, and $T(w)=0$ otherwise.
    The language $\Lang(T)$ accepted by $T$ is defined as $\{ w \in \ialphabet^*
    \colon T(w) = 1\}$.
    We say that $T$ has \emph{no positional encoding} (NoPE) if the positional
    encoding is a constant function. 

\revised{
    \begin{remark}
        Several studies (e.g., \cite{MerrillS23,SAL25,LiC25}) consider the capabilities of
        transformers in the context of restricted precision, such as assuming computations
        are carried out under the assumption of finite representation sizes. We do not focus on 
        these aspects, but note that it is easy to see that our key results, such as \cref{polynomial-inequality},
        also apply under so-called \emph{log-precision} assumptions (cf.\
        \cite{MerrillS23}; also see \cite{MS23-nips})
        for rational numbers. This means that the binary representation size of a number
        $p/q \in \mathbb{Q}$ grows logarithmically with the length of the input.
    \end{remark}
}
  \revised{A \emph{Softmax Attention Transformer} is a transformer using only $\softmax$ layers
  whereas an \emph{AHA Transformer} is a transformer using only $\aha$ layers.
  By $\SMAT$ we denote the class of all languages accepted by softmax attention
  transformers and by $\PEAHAT$ we denote the class of all languages accepted by AHA
  transformers. To all classes we of transformer languages we append ``$[\mathsf{U}]$'' to
  denote languages of transformers with only uniform layers, e.g.\ $\PEAHAT[U]$. We prepend
  ``$\mathsf{NoPE}$'' to denote only languages of transformers with no positional encoding,
  e.g.\ $\AHATem[U]$.
  Note that all transformer models we are considering in this paper have only one attention head.}

\subsection{Counting Properties}
We now define a framework for studying the counting ability of transformers.
Intuitively, our framework focuses on ``counting properties''. As we
shall see below, we can build many interesting formal languages with the help 
of purely counting properties.

Given a permutation $\pi: \{1,\ldots,n\} \to \{1,\ldots,n\}$ and a string
$w = w_1 \cdots w_n$ of length $n$, the string $\pi(w) := w_{\pi(1)}\cdots
w_{\pi(n)}$ is obtained by permuting the letters in $w$ according to $\pi$.
\begin{definition}
    A \defn{counting property} over the alphabet $\ialphabet$ is a 
    permutation-closed language $L$, i.e., for 
    each $w \in \ialphabet^*$, it is the case that $w \in L$ iff $\pi(w) \in L$
    for \emph{each} permutation $\pi$ over $\{1,\ldots,|w|\}$.
\end{definition}
Examples of counting properties are $\MAJ$ and $\PARITY$ (see (\ref{definition-maj}), (\ref{definition-parity})).
We often identify a counting property $L$ with its set $\Psi(w) 
\subseteq \N^{|\ialphabet|}$ of letter counts (i.e. Parikh image).
By $\PI$, we denote the class of counting properties over 
$\ialphabet$. Counting properties are also called \defn{permutation-invariant} or
    ``proportion-invariant'' languages, e.g., see 
    \cite{perez,barcelo2023logical}.

\paragraph{Why counting properties?} Certainly, many languages of interests 
have both a ``counting component'' and an ``order component''. Take, for
example, the language $L_1 = \{ \lta^n\ltb^n\ltc^n : n \geq 0 \}$. Our framework focuses
on \emph{purely} counting properties for two reasons. Firstly, it abstracts away
non-counting components that cannot be captured by the model. Secondly,
many formal languages $L$ of interests can be constructed by taking intersection
of a counting property $P$ and an order (and counting-insensitive) language
$L'$. For example, $L_1$ above can be written as
$P \cap L'$, where $P = \{ w \in \ialphabet^* : |w|_{\lta} = |w|_{\ltb}  = |w|_{\ltc} \}$ and
$L' = \lta^*\ltb^*\ltc^*$. Finally, multiple key languages 
in the literature on the expressivity of transformers are in fact counting
properties (e.g. $\MAJ$ and $\PARITY$).



\section{Capturing Semialgebraic Counting Properties}
\label{sec:semialgebraic}
A subset $S\subseteq\N^m$ is \emph{semi-algebraic} if it is a Boolean
combination of sets of the form $S_p=\{\bx\in\N^m \mid p(\bx)>0\}$ for some
polynomial $p\in\Z[X_1,\ldots,X_m]$. A language $L\subseteq\Sigma^*$ is
\emph{semi-algebraic} if there is a semi-algebraic set $S\subseteq\N^m$ and
$\Sigma=\{\lta_1,\ldots,\lta_m\}$ such that $L=\{w\in\{\lta_1,\ldots,\lta_m\}^*
\mid \Psi(w)\in S\}$. Let $\SemiAlg$ denote the class of semi-algebraic
languages. An example is
\begin{equation}
	\SQRT=\{w\in\{\lta,\ltb\}^* \mid |w|_{\lta}<
    |w|/\sqrt{2}\},\label{sqrt-language},
\end{equation}
since $|w|_{\lta}<|w|/\sqrt{2}$ if and only if $2|w|_{\lta}^2<|w|^2$.
Likewise, extending the coefficients of our polynomials to rational numbers
does not increase the expressiveness of semialgebraic sets, e.g., $\tfrac{7}{3}xy + y^2 
> 8x - 3$ can be rewritten as $7xy + 3y^2 > 24x - 9$. 
Note that for every $p\in\Z[X_1,\ldots,X_m]$, the set $\{\bx\in\N^m \mid 
p(\bx)=0\}$ is semi-algebraic, because $p(\bx)=0$ if and only if 
$-p(\bx)^2+1>0$. Thus, every solution set to polynomial equations is also
semi-algebraic.

We show Theorem \ref{th:semialgebra-informal}. Since $\PEAHAT[U]\subseteq\SMAT$, it sufices to construct a $\PEAHAT[U]$. We will even construct a $\AHATem[U]$. The key ingredient is:
\begin{proposition}\label{polynomial-inequality}
	For every polynomial $p\in\Z[X_1,\ldots,X_m]$, the language
	$L_{p>0}=\{w\in\{\lta_1,\ldots,\lta_m\}^* \mid p(\Psi(w))>0\}$ belongs
    to $\AHATem[U]$. Thus, $L_{p>0}$ is in SMAT. 
\end{proposition}
Let us see why \cref{polynomial-inequality} implies
$\SemiAlg\subseteq\AHATem[U]$.  First, the complement of each language
$L_{p>0}$ can be obtained, because $p(\bx)>0$ is violated if and only if
$-p(\bx)+1>0$. Moreover, $\AHATem$ is closed under union and intersection (we prove a stronger fact in \cref{app:union-intersection}). We can thus accept all Boolean combinations of
languages of the form $L_{p>0}$, and hence $\SemiAlg$.

To show \cref{polynomial-inequality}, we will use polynomials that are \emph{homogeneous},
meaning all monomials have the same degree. Note that given an arbitrary
polynomial $p\in\Z[X_1,\ldots,X_m]$ of degree $d$, we can consider the
polynomial $q\in\Z[X_0,\ldots,X_m]$ with
$q=X_0^dp(\tfrac{X_1}{X_0},\ldots,\tfrac{X_m}{X_0})$, which is homogeneous. It
has the property that $p(x_1,\ldots,x_m)>0$ if and only if
$q(1,x_1,\ldots,x_m)>0$. Therefore, from now on, we assume that we have a
homogeneous polynomial $q\in\Z[X_0,\ldots,X_m]$ and want to construct an AHAT[U] for the language $K_q=\{w\in\{\lta_1,\ldots,\lta_m\}^* \mid \text{$q(1,\bx)>0$ for $\bx=\Psi(w)$}\}$.

To simplify notation, we denote the end marker by $\lta_0$.
Thus, the input will be a string $w\in\{\lta_0,\ldots,\lta_m\}^+$
that contains $\lta_0$ exactly once, at the end.  Since $|w|_{\lta_0}=1$ is
satisfied automatically, our AHAT[U] only has to check that $q(x_0,\ldots,x_m)>0$,
where $x_i=|w|_{\lta_i}$.
The input encoding is the map $\{\lta_0,\ldots,\lta_m\}^*\to \Q^m$ with
$\lta_i\mapsto\be_i$, where $\be_i\in\Q^m$ is the $i$-th unit vector.

\revised{\sparagraph{Overall idea} Roughly speaking, we implement multiplication via
averaging as follows. For each letter $\lta_i$, we have a gadget
that can multiply an existing entry $y\in[0,1]$ (in each vector) by $\tfrac{x_i}{n+1}$
(recall that $n$ is the overall word length). This is done by first multiplying
the existing entries either (i)~by $1$ if the current letter is $\lta_i$ or
(ii)~by $0$ if the current letter is not $\lta_i$. This is achieved using a
ReLU layer, by observing that for $u\in[0,1]$ and $v\in\{0,1\}$, we have $u\cdot v=\ReLU(u-(1-v))$. After this, we average over the entire input in this
component. Since we make sure that all the entries we multiplied with $0$ or
$1$ had the same value $y\in[0,1]$, taking the average will result in the value
$\tfrac{y\cdot x_i}{n+1}$. Repeating this for a monomial
$x_{i_1}\cdots x_{i_d}$, we arrive at the value $\tfrac{x_{i_1}\cdots
x_{i_d}}{(n+1)^d}$.
Since our homogenization step ensured that all our monomials have the same
degree $d$, adding up the entries corresponding to the monomials will yield
$\tfrac{p(\Psi(w))}{(n+1)^d}$. Finally, the latter quantity is positive if and
only if $p(\Psi(w))> 0$.}

\sparagraph{Step I: Compute frequencies}
Our AHAT[U] first uses an attention layer to compute $m+1$ new components, where
$i$-th component holds $\tfrac{x_i}{n+1}$, where $n+1$ is the length of the
input (including the end marker). This is easily done by attending to all
positions and computing the averages of the first $m+1$ components.
To simplify notation, we will index vectors starting with index $0$.

\newcommand{\omult}{\mathsf{omult}}
\sparagraph{Step II: Multiplication gadgets}
Second, we have a sequence of gadgets (each consisting of \revised{one ReLU layer and one attention layer}) that perform the multiplication. 
Each gadget introduces a new component, and does not change the existing components.
Between gadget executions, the following additional invariants are upheld:
\begin{enumerate*}[label=(\roman*)]
\item Overall, a gadget does not change existing components: it introduces one new component.
\item The components $\{0,\ldots,m\}$ are called the \emph{initial} components.
\item All other components are \emph{uniform}, i.e.\ they are the same across all positions. 
\item The uniform components carry values in $[0,1]$.
\end{enumerate*}
Thus, we will call components $0,\ldots,m$ the \emph{initial} components; and we call components $>m$ the \emph{uniform} components.

Our gadgets do the following. Suppose we have already produced $\ell$ additional components. For each initial component $i\in[0,m]$ and uniform component $j\in[m+1,m+1+\ell]$, gadget $\omult(\ell,i,j)$, which introduces a new component, will carry the value $\frac{x_i\cdot y_j}{n+1},$
where $y_j$ is the value in component $j$ of all vectors. Recall that we use $x_i$ to denote the number of $\lta_i$ 
occurrences in the input for $i\in[0,m]$.

We implement the gadget $\omult(\ell,i,j)$ using some ReLU layers and an
attention layer. Suppose that before, we have the vector
$\bu_p\in\Q^{m+1+\ell}$ in position $p$.  First, using ReLU layers, we
introduce a new component that in position $p$ has the value $\bu_p[i]\cdot
\bu_p[j]$. This can be achieved since $\bu_p[i]$ is in $\{0,1\}$ and
$\bu_p[j]\in[0,1]$: Notice that $\bu_p[i]\cdot\bu_p[j]=\ReLU(\bu_p[j]-(1-\bu_p[i]))$.
Indeed, if $\bu_p[i]=1$, then this evaluates to $\bu_p[j]$; if
$\bu_p[i]=0$, then we get $\ReLU(\bu_p[j]-1)=0$.
We then use uniform attention to compute the average of this new
$\bu_p[i]\cdot\bu_p[j]$-component across all vectors. Since there are $n+1$
vectors, exactly $x_i$ of them have $\bu_p[i]=1$, and also $\bu_p[j]=y_j$, we get the desired
$\tfrac{x_i\cdot y_j}{n+1}$.

\sparagraph{Step III: Computing the polynomial}
We now use our gadgets to compute the value of the polynomial. For each
monomial of $q$, say $X_{i_1}\cdots X_{i_d}$, we use $d-1$ gadgets to compute
$x_{i_1}\cdots x_{i_d}/(n+1)^d$: The frequency computation in the beginning
yields $x_{i_1}/(n+1)$, and then we use gadgets to compute
$x_{i_1}x_{i_2}/(n+1)^2$, $x_{i_1}x_{i_2}x_{i_3}/(n+1)^3$, etc.\ until
$x_{i_1}\cdots x_{i_d}/(n+1)^d$. Finally, we use a ReLU layer to multiply each
monomial with a rational coefficient, and compute the sum of all the monomials.
Thus, we have computed $q(x_0,\ldots,x_m)/(n+1)^d$. We accept if and only if
$q(x_0,\ldots,x_m)/(n+1)^d>0$. Note that this is the case if and only if
$q(x_0,\ldots,x_m)>0$.

This completes \cref{polynomial-inequality} and thus $\SemiAlg\subseteq\AHATem[U]$.
\revised{
	We remark that the embedding dimension and the number of layers of our transformer in Proposition 
	\ref{polynomial-inequality} depends on the degree $d$ 
	and the number $M$ of monomials in $p$. We require at most $O(d)$ layers, each layer 
	increasing the degree of the computed monomials by one. In the appendix, we detailed that 
	polynomials of degree $d$ are accepted by $\AHATem[U]$ using at most $d$ attention layers 
	(see \cref{semialg-parametric}). The embedding dimension is $O(dM)$ because we store the value of each 
	monomial in a separate dimension.
}

\section{Characterizing semi-algebraic counting properties}
\label{sec:characterization}
We have shown that $\AHATem[U] \subseteq \SMAT$ can capture semi-algebraic
counting properties. We now prove that the subclass $\AHATem[U]$ precisely
characterizes $\SemiAlg$.

\begin{proposition}\label{ahat-to-semi-algebraic}
	$\AHATem\subseteq\SemiAlg$.
\end{proposition}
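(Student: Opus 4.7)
The key observation is that NoPE forces a strong invariance: at every layer, all positions carrying the same input letter hold the same vector, so the entire computation is a function of the Parikh image $\bx=\Psi(w\$)$. I will show by induction on the layers that each of these per-letter vectors is a \emph{piecewise rational} function of $\bx$, where the pieces are semi-algebraic subsets of $\N^{|\Sigma|+1}$. Concretely, the invariant to maintain is: for each layer $\ell$ and each letter $a\in\Sigma\cup\{\$\}$, there is a function $\bu^{(\ell)}_a\colon\N^{|\Sigma|+1}\to\Q^{d_\ell}$ together with a finite partition $\N^{|\Sigma|+1}=\bigsqcup_j S_j$ into semi-algebraic sets and, for each component, polynomials $p_j,q_j\in\Z[\bx]$ with $q_j>0$ on $S_j$, such that $\bu^{(\ell)}_a=p_j/q_j$ on $S_j$. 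Requiring $q_j$ to be strictly positive (arranged by sign flips and additional refinement) turns all sign tests and rational-function comparisons into polynomial sign conditions, keeping us inside the semi-algebraic world.

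The base case $\ell=0$ is immediate since $\bu^{(0)}_a=\emb(a)$ is constant. Affine maps and polynomial arithmetic trivially preserve the piecewise-rational form, so the substantive step is the attention layer. Since NoPE ensures that all positions with letter $b$ carry the same vector, the score $\langle \bk_{a},\bq_{b}\rangle$ from a query position with letter $a$ to a key position with letter $b$ is piecewise rational in $\bx$. The max-set $P$ of positions collapses to a set $A_i\subseteq\Sigma\cup\{\$\}$ of max-scoring \emph{letters} (among those present in the input), and there are only finitely many candidates. For each candidate $A$, the condition ``$A_i=A$'' is a conjunction of constraints $x_b>0$ (for $b\in A$), either $x_b=0$ or $\mathrm{score}(a,b)<\mathrm{score}(a,b^\star)$ for some $b^\star\in A$ (for $b\notin A$), and equalities of scores within $A$ — all polynomial sign conditions once denominators are cleared using $q_j>0$. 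On the piece where $A_i=A$, the attention value is $\ba_i=(\sum_{b\in A} x_b\,\bv_b)/(\sum_{b\in A} x_b)$, a rational function with strictly positive denominator. The subsequent ReLU network then composes piecewise affine maps with piecewise rational inputs, splitting further on signs, and the invariant persists.

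Once the invariant is established through all $L$ layers, the acceptance condition $\by_n[1]>0$ amounts to $\bu^{(L)}_{\$}(\bx',1)[1]>0$, where $\bx'=\Psi(w)\in\N^{|\Sigma|}$ and we have substituted the fixed count $x_{\$}=1$. Substituting a constant into a polynomial yields a polynomial, and the strict inequality on a piecewise rational function with positive denominators unfolds into a finite Boolean combination of polynomial inequalities in $\bx'$. Together with the semi-algebraic constraint $\bigvee_{a\in\Sigma} x_a>0$ enforcing $w\in\Sigma^+$, this exhibits the accepted Parikh images as a semi-algebraic subset of $\N^{|\Sigma|}$, so the recognized language lies in $\SemiAlg$.

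I anticipate the main obstacle to be the bookkeeping around the attention step: one must simultaneously refine the partition inherited from the previous layer (so that numerators and denominators are globally defined on each piece), enumerate the possible max-sets $A_i$ across query letters, and verify that comparing rational functions on each piece yields merely polynomial conditions. The choice to maintain $q_j>0$ rather than only $q_j\neq 0$ is what makes all of this tractable without invoking Tarski–Seidenberg or quantifier elimination — every semi-algebraic condition produced in the argument is an explicit finite Boolean combination of polynomial inequalities with integer coefficients, which is exactly the definition of $\SemiAlg$.
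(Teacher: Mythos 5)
Your proposal is correct and follows essentially the same route as the paper's proof: exploit that NoPE collapses all positions with the same letter to a single vector, case-split over the finitely many possible argmax letter-sets and ReLU activation patterns, express the resulting per-letter vectors as rational functions of the Parikh image, and clear denominators to obtain a finite Boolean combination of polynomial inequalities. Your explicit bookkeeping of strictly positive denominators and the layer-by-layer partition refinement is a slightly more careful packaging of the same argument (the paper enumerates all $2^{r+(m+1)^2a}$ branch choices globally and writes one disjunction of conjunctions), but the substance is identical.
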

\vspace{-0.5cm}
	\begin{proof}
	Suppose that $\Sigma=\{\lta_1,\ldots,\lta_m\}$ is our alphabet,
	$\lta_0$ the end marker, and $x_i\in\N$ the number of occurrences of
	$\lta_i$ in the input.
 We say that a
	position $p$ is an \emph{$\lta_i$-position} if the input holds $\lta_i$
	at position $p$.  Notice that an AHAT without positional encoding
	cannot distinguish vectors that come from the same input letter.  This
	means, in any layer, any two $\lta_i$-positions will hold the same
	vector. Thus, the vector sequence on layer $\ell$ is described by
	rational vectors $\bu_{\ell,0},\ldots,\bu_{\ell,m}$, where
	$\bu_{\ell,i}$ is the vector at all the $\lta_i$-positions on layer
	$\ell$.  Moreover, for each $i$, the set of positions maximizing an
	attention score also either contains all $\lta_i$-positions, or none of
	them. Therefore, if the AHAT has $a$ attention layers, there are at
	most $((2^{m+1})^{m+1})^a=2^{(m+1)^2a}$ possible ways to choose the
	positions of maximal score: On each attention layer, and for each
	$i\in[0,m]$, we select a subset of the $m+1$ letters. For
	each ReLU node and each $i$, there are two ways its expression
	$\ReLU(v)$ can be evaluated: as $0$ or as $v$.  Thus, if there are $r$
	ReLU nodes, then there are $2^r$ ways to evaluate all those nodes.

	 For each of these $2^{r+(m+1)^2a}$ choices, we construct a conjunction
	 of polynomial inequalities that verify that (i)~this choice actually
	 maximized scores, (ii)~the resulting vector at the right-most position
	 in the last layer satisfies the accepting condition. This is easy to
	 do by building, for each layer $\ell$ and each $i$, expressions in
	 $x_1,\ldots,x_m$ for the vectors $\bu_{\ell,i}$, assuming our choice
	 above. These expressions have the form
	 $p(x_1,\ldots,x_m)/q(x_1,\ldots,x_m)$ (averaging can introduce
	 denominators). Here, once we have expressions for $\bu_{\ell,i}$, we
	 can use them to build expressions for $\bu_{\ell+1,i}$ by following
	 the definition of AHAT. Checking (i) and (ii) is then also easy,
	 because inequalities involving quotients
	 $p(x_1,\ldots,x_m)/q(x_1,\ldots,x_m)$ can be turned into polynomial
	 inequalities by multiplying with common denominators.
	 Finally, we take a disjunction over all $2^{r+(m+1)a}$ conjunctions. 
	\end{proof}

\sparagraph{Inexpressibility of $\PARITY$.}
Our characterization of $\AHATem$
(i.e. \cref{ahat-to-semi-algebraic}) implies an interesting inexpressibility
result regarding $\PARITY$ (see (\ref{definition-parity}):
%
%
\begin{restatable}{corollary}{ResultParity}\label{result-parity}
	$\PARITY$ does not belong to $\AHATem$.
\end{restatable}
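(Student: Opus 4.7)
The plan is to invoke \cref{result-semi-algebraic}, which reduces the claim to showing $\PARITY\notin\SemiAlg$. I would assume for contradiction that there is a semi-algebraic $S\subseteq\N^2$ such that $\PARITY = \{w\in\{\lta,\ltb\}^+\mid \Psi(w)\in S\}$, and derive a contradiction by examining a one-dimensional section of $S$.

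The first step is to specialize to the slice $y=1$: since $\lta^x\ltb$ is nonempty and belongs to $\PARITY$ exactly when $x$ is even, the set $E := \{x\in\N \mid (x,1)\in S\}$ must equal $\{x\in\N \mid x \text{ is even}\}$. Because substituting $X_2\mapsto 1$ turns each polynomial $p\in\Z[X_1,X_2]$ defining $S$ into a univariate polynomial $p(X_1,1)\in\Z[X_1]$, and since this substitution commutes with Boolean combinations, $E$ is itself semi-algebraic as a subset of $\N$.

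The crux of the argument is then the following elementary dichotomy, which I would prove by an asymptotic-sign argument: every semi-algebraic subset of $\N$ is either finite or cofinite. Indeed, for any nonzero $q\in\Z[X]$ the set $\{x\in\N \mid q(x)>0\}$ differs from $\N$ or from $\emptyset$ in only finitely many elements, because $q$ has finitely many roots and its sign at large $x$ is determined by its leading coefficient. The class of ``finite-or-cofinite'' subsets of $\N$ is closed under complement, union, and intersection, so $E$ is finite or cofinite. But the even numbers are plainly neither, yielding the desired contradiction.

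The main obstacle is largely conceptual rather than technical: one has to recognize that although $\SemiAlg$ is expressive enough to encode solution sets of arbitrary Diophantine systems, its univariate sections still behave like classical semi-algebraic subsets of $\R$ intersected with $\N$, for which the eventual sign of a nonzero polynomial rules out periodic behavior such as parity.
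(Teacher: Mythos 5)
Your proposal is correct and follows essentially the same route as the paper's proof: reduce via \cref{result-semi-algebraic} to showing $\PARITY\notin\SemiAlg$, take a one-dimensional section (the paper substitutes $Y=0$ rather than $Y=1$), and conclude by the eventual-sign behavior of univariate integer polynomials, which the paper phrases via leading coefficients of a DNF and you phrase as the finite-or-cofinite dichotomy for semi-algebraic subsets of $\N$. The two formulations are interchangeable, and your choice of the slice $Y=1$ even sidesteps the minor nonemptiness caveat in the paper's $Y=0$ substitution.
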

$\PARITY$ is known to be accepted by AHAT~\citep{barcelo2023logical} and by
SMAT~\citep{CC22} (with PE). Inexpressibility of $\PARITY$ in a
length-generalizable subclass of $\SMAT$ and $\AHAT$ (with struct future
masking and positional encodings) is known \citep{framework}. Similarly,
$\PARITY$ is not expressible by $\SMAT$ with strict future
masking~\citep{hahn20}. \cref{result-parity} complements these results and is an
easy corollary of \cref{ahat-to-semi-algebraic} (see \cref{app:parity}).

\section{Applications}
\label{sec:apps}

\subsection{Universality and undecidability of transformers}
Let us discuss why universality/undecidability 
(i.e. \cref{main-result-universality,main-result-undecidability}) follow from
\cref{main-result-semialgebraic}. First, by the well-known theorem
``MRDP'' theorem \citep{Mat93} due to Matiyasevich, Robinson, Davis, and 
Putnam, every language in $\RE\cap\PI$ is a projection of a language of the form
$L_p=\{w\in\{\lta_1,\ldots,\lta_m\}^*\mid p(\Psi(w))=0\}$, where
$p\in\Z[X_1,\ldots,X_m]$ is a polynomial. Since $L_p$ belongs to $\AHATem[U]$,
we thus obtain \cref{main-result-universality}. Furthermore, since our translation from polynomials to $\AHATem[U]$ (and thus $\SMAT$) is effective, this also implies \cref{main-result-undecidability}: By the MRDP theorem (which is also effective), it is undecidable whether a given polynomial $p\in\Z[X_1,\ldots,X_m]$ has a solution. Using our translations, we can turn such a $p$ into a $\AHATem$ (or $\SMAT$) that is non-empty if and only if $p$ has a solution.

\paragraph{Using only two layers}
In fact, in \cref{main-result-universality,main-result-undecidability}, we even
claim that two layers suffice for universality and undecidability. Let us
sketch this here. First, our construction above yields a
$\AHATem[U]$ of at most $\ell$ layers, provided that the polynomials in the
semialgebraic set all have degree $\le\ell$ (see \cref{app:semialg-to-ahat}). In particular, we show that for each $\ell$, $\AHATem[\ell,U]$ is closed under union and intersection (see \cref{app:union-intersection}). Furthermore, we rely on the
well-known fact that the set of solutions of a polynomial equation $p=0$ can
always be written as the projection of the set of solutions of a \emph{system
of quadratic equations}. Since by our stronger version of
\cref{main-result-semialgebraic}, intersections of solution sets of 
quadratic equations only require a $\AHATem[U]$ with $\le 2$ layers, this yields
the stronger versions of \cref{main-result-universality,main-result-undecidability}. See
\cref{app:parametric} for details (where we also show that with just one layer, \cref{main-result-universality,main-result-undecidability} do not hold).

\subsection{Comparison with C-RASP and LTL with Counting}
C-RASP 
\citep{framework,YC24} is a simple programming language that can be converted
into softmax transformers. 
In
particular, it is a subset of the so-called \emph{LTL with Counting}
\citep{YC24,barcelo2023logical}. For example, $\{ w \in \{a,b\}^* : |w|_a = |w|_b
\}$ can be written as the following formula in LTL with Counting:
    $\overrightarrow{\#a} = \overrightarrow{\#b}$.
In particular, only linear expressions can be constructed in such formulas.
We show in the appendix that LTL with Counting (and therefore C-RASP) only
capture (semi)linear counting properties, i.e., boolean combinations of linear
inequalities (and modulo arithmetics), so not languages like  $L_k := \{ w \in \{a,b\} : |w|_a^k \geq |w|_b \}$.
\begin{proposition}
    LTL with Counting can define only (semi)linear counting properties.
\end{proposition}





\section{Experiments}
\label{sec:exps}
In this section, we experimentally complement our main result (cf.
    \cref{th:semialgebra-informal}) that
transformers can capture 
solutions of polynomial equations of higher degree.
In particular, our results suggest that softmax transformers should
be able to learn languages encoding solutions of polynomial equations. 

We test our hypothesis on extensions of $\MAJ$ with polynomial inequalities.
That is, we define the language $L_k$ is defined by $L_k = \{w \in \{a,b\}^+ 
\mid |w|_b \leq (|w|_a)^k\}$, representing
the set of solutions for the simple equation $y \geq x^k$. 
\begin{quote}
    \emph{
    Do softmax transformer classifiers perform well on language $L_k$?
    Additionally, can we observe tendencies of length-generalization?
    }
\end{quote}
In other words, the task of the transformer is 
a binary classification such that $T(w)$ accepts if $w \in L_k$ and it does not if $w \not\in L_k$.

\revised{
We train softmax encoders without positional encoding and otherwise in line with the vanilla model, introduced by
\cite{Vaswani17}, as binary classifiers using components offered by Pytorch's \texttt{nn.Module} based on a balanced dataset of $5\cdot 10^5$ data points
sampled from $L_k$ for $k = 1, \dotsc, 5$ of words up to length 500
In all experiments, we conduct a single epoch and
choosed the best model conducting early stopping based on the binary-cross entropy loss combined with softmax, the typical metric for models 
outputting a probability for binary classification, offered in a numerical stable version by Pytorch's \texttt{nn.Module} in form of 
\texttt{BCEWithLogitsLoss} , on a validation dataset sampled from the same distribution
and of the same size as the training dataset. To partially explore the hyperparameter space, we conduct a grid search
over number of layers 1 to 5, number of heads per layer 1, 2 or 4. In all experiments, we fixed the input
features to 32, the feedforward dimension to 64, the dropout rate to 0.3, and optimized using the AdamW optimizer with a learning rate of $10^{-4}$ and 
weight decay of 0.01 as, again, offered by Pytorch's \texttt{optim} package.
}

\begin{figure}[t]
    \centering
    \begin{minipage}{0.4\textwidth}
        \centering
        \begin{tabular}{cccc}
            \toprule
            $k$ & Val. Perf. & Test Perf. & Gen. Perf. \\
            \midrule
            1 & 0.015 & 0.016/0.99 & 0.301/0.95 \\
            2 & 0.024 & 0.033/0.99 & 0.324/0.94 \\
            3 & 0.023 & 0.021/0.99 & 0.299/0.96 \\
            4 & 0.019 & 0.020/0.99 & 0.099/0.97 \\
            5 & 0.020 & 0.024/0.99 & 0.107/0.96 \\
            \bottomrule
        \end{tabular}
    \end{minipage}
    \hfill
    \begin{minipage}{0.55\textwidth}
        \centering
        \begin{tikzpicture}
            \begin{axis}[
                width=\textwidth,
                height=0.6\textwidth,
                ylabel={Loss},
                xlabel={$k$},
                symbolic x coords={1,2,3,4,5},
                xtick=data,
                legend style={at={(0.5,1.2)}, anchor=north, draw=none, legend columns=-1},
                ymin=0.001, ymax=1,
                ymode=log,
                ytick={1,0.1,0.01,0.001},
                yticklabels={$10^0$,$10^{-1}$,$10^{-2}$,$10^{-3}$},
                enlarge x limits=0.2,
                grid=major,
            ]
            \addplot+[mark=* , color=blue!50, thick, mark options={fill=blue!50}] coordinates {(1,0.015) (2,0.024) (3,0.023) (4,0.019) (5,0.020)};
            \addplot+[mark=square*, color=green!50, thick, mark options={fill=green!50}] coordinates {(1,0.016) (2,0.033) (3,0.021) (4,0.020) (5,0.024)};
            \addplot+[mark=triangle*, color=red!50, thick, mark options={fill=red!50}] coordinates {(1,0.301) (2,0.324) (3,0.299) (4,0.099) (5,0.107)};
            \legend{Val. Perf., Test Perf., Gen. Perf.}
            \end{axis}
        \end{tikzpicture}
    \end{minipage}
    \caption{
            \revised{
                Performance of softmax transformer classifiers for $L_k$ ($k=1$ to $5$).
                \textbf{Validation Performance (Val. Perf.)}: BCEWithLogitsLoss on validation data.
                \textbf{Test Performance (Test Perf.)}: BCEWithLogitsLoss and Accuracy (separated by /) on test data.
                \textbf{Generalization Performance (Gen. Perf.)}: BCEWithLogitsLoss and Accuracy (separated by /) on generalization test set.
                The y-axis uses a logarithmic scale to accommodate the different orders of magnitude in the results.
                }
            }
    \label{fig:table-plot-side-by-side}
\end{figure}

Figure~\ref{fig:table-plot-side-by-side} presents the outcome of our
experiments. The table on the left-hand side demonstrates
the best observed performance on the validation dataset (first
column), a balanced test dataset derived from the same distribution as
the training and validation data (second column). This specifically
implies that this dataset also only includes words of length up to 500. The
final column represents another balanced test dataset encompassing
words from length 501 to 1000, used to potentially unveil some length generalization performance. 
The plot on the right visualizes the same results.

Generally, we observe very high performance with an accuracy of $\ge 0.99$ on
the in-distribution test dataset. Additionally, while the performance on the 
test dataset with longer words decreases, it remains relatively high, with an accuracy of
$\ge 0.94$ in all instances.
Especially, it is to be assumed that with a more extensive experimental setup, this gap in performance will
decrease. Therefore, we infer that our trained encoders perform well
and that length generalization is supported, indicating that the model
can capture the semantics of $L_k$.
\revised{
In Appendix~\ref{app:further_experiments} we report additional results, showing strong performance, with a decrease in performance on longer inputs.
}

\section{Concluding Remarks}\label{sec:conc}
\paragraph{Related Work.} 
Lots of work have been done in recent years on the expressiveness of
transformers for general (not necessarily counting) properties (cf. see 
\citep{transformers_survey}). Counting properties
--- e.g., the languages $\PARITY$ and $\MAJ$ --- have frequently featured in
transformers expressivity research, which highlight their importance. Various 
theoretical transformer
models have been used in the literature employing different assumptions on the
attention mechanisms (hardmax attention vs. softmax attention), positional
encodings, etc. For example, a large proportion of results use hardmax 
attention, which is not used
by practical transformers (which instead use softmax attention). In addition,
some works (e.g.\ \cite{perez,barcelo2023logical}) employ extremely complex
positional encodings with no restrictions. That said, several recent works 
have adopted more practical models.
In particular, the works of \cite{YC24,framework,simulating,yang2025kneedeep} employ
softmax attention transformers and simple classes of positional encodings
(causal masking, local, etc.). Our results also employ a similar model 
(AHAT[U] and SMAT); in fact, we proved that semialgebraic counting properties 
can be captured by transformers without any positional encodings.
\cite{yang2025kneedeep} gave a restriction of softmax attention transformers
with bounded finite precision outside the attention computation, which 
characterizes
C-RASP. Our experimental results seem to suggest this transformer model only 
lower-bounds the expressivity of real-world transformers, which can capture
counting properties beyond C-RASP.

Concerning verification of transformers, we mention the works by \cite{ACY24}
and \cite{succinct}, showing that reasoning about Unique-Hard Attention
Transformers (UHAT) are decidable with complexity EXPSPACE-complete. UHAT is
known to overapproximate what can be captured by softmax transformers with
bounded finite precision \citep{LiC25}. We also mention the recent work
\citep{yang26}, showing that verifying C-RASP is undecidable.

\revised{
    \paragraph{Potential Applications in NLP.} 
%
%
By Weierstrass theorem, polynomials can approximate any continuous function of 
the number of occurrences of tokens. This suggests that transformers can solve
practical NLP tasks that require computation of nonlinear statistics in the word
frequencies. 

Counting properties are tightly
connected to \emph{Vector Space Model 
(VSM)} \citep{vsm,gvsm,SLY19} that has applications in text classification and 
similarity analysis, where the 
standard method has been to employ Support Vector Machines (SVM), together 
with kernel analysis (e.g. using polynomial kernels). Our results imply that 
transformers are expressive enough to perform such tasks. In VSM, 
a document $D$ is a vector $v_D$ indexed by ``terms'' that may occur in $D$. 
That is, $v_D[t]$ is a count on the number of occurrences of 
$t$ in $D$. To compare similarity between two documents $D,D'$, we 
may consider the Euclidean distance between $v_D$ and $v_{D'}$, which requires 
a polynomial. 
Also, there are often challenges including "related terms" (e.g. husband, 
wife, and spouse), which are missed when we only use the aforementioned metric.
Thus, a similarity measure is often learned (see Section 10.2.2 in 
 \citep{kernel-book}, where VSM is used in combination with polynomial kernels). Our results show that transformers 
can solve such a task.
A related task is the problem of determining proximity to a human written text,
as dictated by \cite{zipf35} stating that the frequency 
of the $k$-th most frequent word is proportional to $1/k$ in a natural 
language. As above, we may compare using Euclidean distance 
a document $D$ with a predetermined Zipf-vector. 
This results in a polynomial, and our results show this can be captured by transformers.
}

\paragraph{Future Work.} We mention several open problems. Firstly, can softmax
attention transformers with causal masking capture counting properties beyond
semialgebraic sets? Secondly, our work has identified a gap in the formalization
of the RASP-L conjecture by \cite{framework}. That is, transformers can capture
and efficiently learn semialgebraic counting properties, which are beyond the
language C-RASP. It is open whether the extension of C-RASP with
inequalities over \emph{nonlinear} polynomials can still be captured by softmax
transformers.

\label{beforebibliography}
\newoutputstream{pages}
\openoutputfile{main.pages.ctr}{pages}
\addtostream{pages}{\getpagerefnumber{beforebibliography}}
\closeoutputstream{pages}

\section*{Acknowledgments}
We thank David Chiang, Michael Hahn, Andy Yang, and anonymous reviews for their
feedback. 

\raisebox{-9pt}[0pt][0pt]{\includegraphics[height=.8cm]{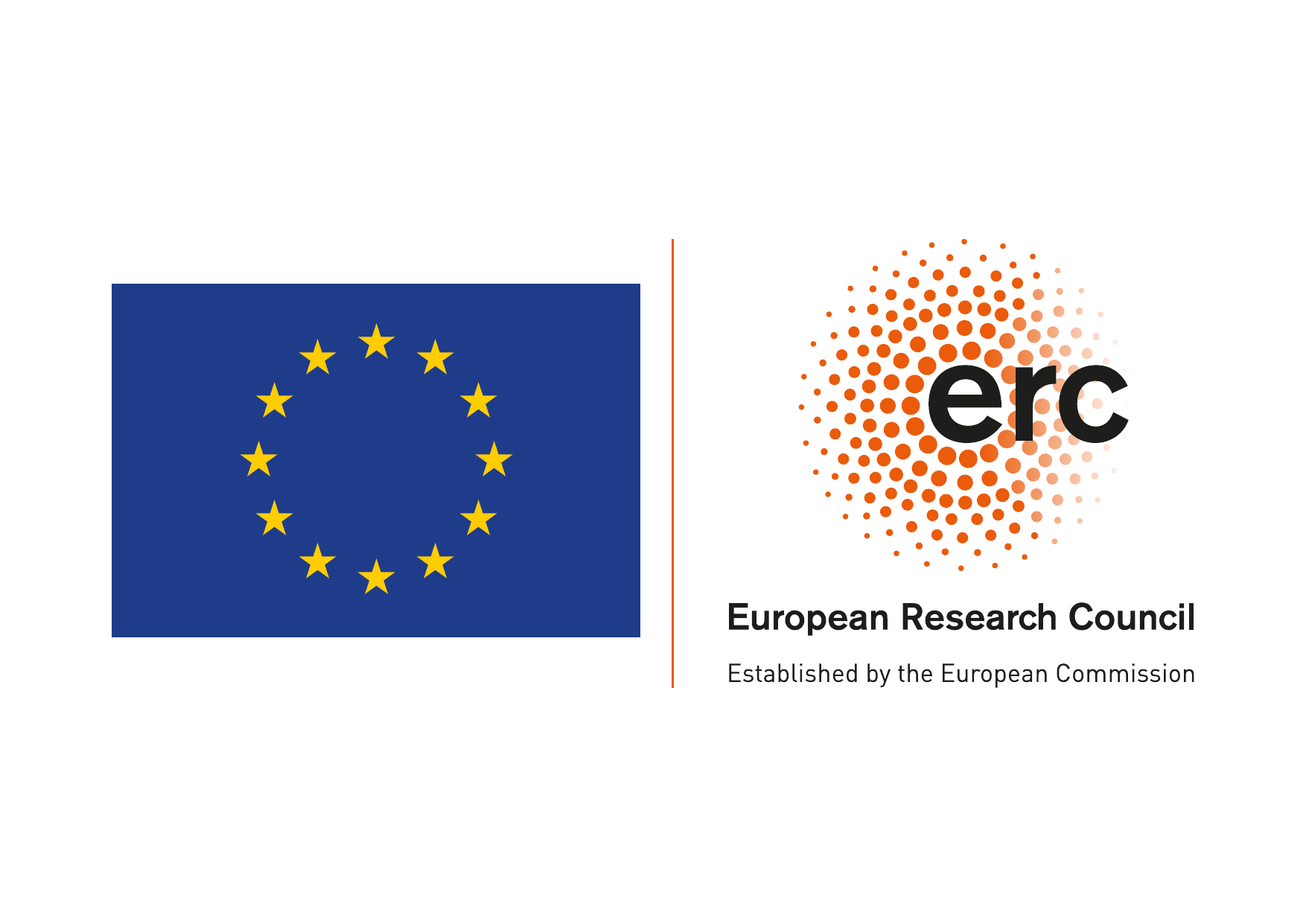}} Marco Sälzer, Chris Köcher, Georg Zetzsche, and Anthony Lin are 
funded by the European Union (ERC, LASD, 101089343 and FINABIS, 101077902). Views and opinions expressed are however those of the authors only and do not necessarily reflect those of the European Union 
or the European Research Council Executive Agency. 
Neither the European Union nor the granting authority can be held responsible for them.

Alexander Kozachinskiy is funded by the National Center for Artificial Intelligence CENIA (FB210017, Basal ANID, and ANID Fondecyt Iniciaci\'{o}n grant 11250060).

\bibliography{refs}

@article{simulating,
  author       = {Andy Yang and
                  Lena Strobl and
                  David Chiang and
                  Dana Angluin},
  title        = {Simulating Hard Attention Using Soft Attention},
  journal      = {CoRR},
  volume       = {abs/2412.09925},
  year         = {2024},
  url          = {https://doi.org/10.48550/arXiv.2412.09925},
  doi          = {10.48550/ARXIV.2412.09925},
  eprinttype    = {arXiv},
  eprint       = {2412.09925},
  timestamp    = {Mon, 20 Jan 2025 22:09:56 +0100},
  biburl       = {https://dblp.org/rec/journals/corr/abs-2412-09925.bib},
  bibsource    = {dblp computer science bibliography, https://dblp.org}
}

@inproceedings{WGY18,
  author       = {Gail Weiss and
                  Yoav Goldberg and
                  Eran Yahav},
  editor       = {Iryna Gurevych and
                  Yusuke Miyao},
  title        = {On the Practical Computational Power of Finite Precision RNNs for
                  Language Recognition},
  booktitle    = {Proceedings of the 56th Annual Meeting of the Association for Computational
                  Linguistics, {ACL} 2018, Melbourne, Australia, July 15-20, 2018, Volume
                  2: Short Papers},
  pages        = {740--745},
  publisher    = {Association for Computational Linguistics},
  year         = {2018},
  url          = {https://aclanthology.org/P18-2117/},
  doi          = {10.18653/V1/P18-2117},
  timestamp    = {Mon, 05 Feb 2024 20:27:01 +0100},
  biburl       = {https://dblp.org/rec/conf/acl/WeissGY18.bib},
  bibsource    = {dblp computer science bibliography, https://dblp.org}
}

@inproceedings{BAG20,
  author       = {Satwik Bhattamishra and
                  Kabir Ahuja and
                  Navin Goyal},
  editor       = {Bonnie Webber and
                  Trevor Cohn and
                  Yulan He and
                  Yang Liu},
  title        = {On the Ability and Limitations of Transformers to Recognize Formal
                  Languages},
  booktitle    = {Proceedings of the 2020 Conference on Empirical Methods in Natural
                  Language Processing, {EMNLP} 2020, Online, November 16-20, 2020},
  pages        = {7096--7116},
  publisher    = {Association for Computational Linguistics},
  year         = {2020},
  url          = {https://doi.org/10.18653/v1/2020.emnlp-main.576},
  doi          = {10.18653/V1/2020.EMNLP-MAIN.576},
  timestamp    = {Tue, 20 Aug 2024 07:54:43 +0200},
  biburl       = {https://dblp.org/rec/conf/emnlp/BhattamishraAG20.bib},
  bibsource    = {dblp computer science bibliography, https://dblp.org}
}

@inproceedings{chomsky,
  author       = {Gr{\'{e}}goire Del{\'{e}}tang and
                  Anian Ruoss and
                  Jordi Grau{-}Moya and
                  Tim Genewein and
                  Li Kevin Wenliang and
                  Elliot Catt and
                  Chris Cundy and
                  Marcus Hutter and
                  Shane Legg and
                  Joel Veness and
                  Pedro A. Ortega},
  title        = {Neural Networks and the Chomsky Hierarchy},
  booktitle    = {The Eleventh International Conference on Learning Representations,
                  {ICLR} 2023, Kigali, Rwanda, May 1-5, 2023},
  publisher    = {OpenReview.net},
  year         = {2023},
  url          = {https://openreview.net/forum?id=WbxHAzkeQcn},
  timestamp    = {Wed, 24 Jul 2024 16:50:34 +0200},
  biburl       = {https://dblp.org/rec/conf/iclr/DeletangRGGWCCH23.bib},
  bibsource    = {dblp computer science bibliography, https://dblp.org}
}

@article{hahn20,
  author       = {Michael Hahn},
  title        = {Theoretical Limitations of Self-Attention in Neural Sequence Models},
  journal      = {Trans. Assoc. Comput. Linguistics},
  volume       = {8},
  pages        = {156--171},
  year         = {2020},
  url          = {https://doi.org/10.1162/tacl\_a\_00306},
  doi          = {10.1162/TACL\_A\_00306},
  timestamp    = {Wed, 19 Jun 2024 17:28:03 +0200},
  biburl       = {https://dblp.org/rec/journals/tacl/Hahn20.bib},
  bibsource    = {dblp computer science bibliography, https://dblp.org}
}

@article{perez,
  author       = {Jorge P{\'{e}}rez and
                  Pablo Barcel{\'{o}} and
                  Javier Marinkovic},
  title        = {Attention is Turing-Complete},
  journal      = {J. Mach. Learn. Res.},
  volume       = {22},
  pages        = {75:1--75:35},
  year         = {2021},
  url          = {https://jmlr.org/papers/v22/20-302.html},
  timestamp    = {Wed, 11 Sep 2024 14:41:28 +0200},
  biburl       = {https://dblp.org/rec/journals/jmlr/PerezBM21.bib},
  bibsource    = {dblp computer science bibliography, https://dblp.org}
}

@inproceedings{CC22,
  author       = {David Chiang and
                  Peter Cholak},
  editor       = {Smaranda Muresan and
                  Preslav Nakov and
                  Aline Villavicencio},
  title        = {Overcoming a Theoretical Limitation of Self-Attention},
  booktitle    = {Proceedings of the 60th Annual Meeting of the Association for Computational
                  Linguistics (Volume 1: Long Papers), {ACL} 2022, Dublin, Ireland,
                  May 22-27, 2022},
  pages        = {7654--7664},
  publisher    = {Association for Computational Linguistics},
  year         = {2022},
  url          = {https://doi.org/10.18653/v1/2022.acl-long.527},
  doi          = {10.18653/V1/2022.ACL-LONG.527},
  timestamp    = {Sun, 19 Jan 2025 13:21:43 +0100},
  biburl       = {https://dblp.org/rec/conf/acl/0001C22.bib},
  bibsource    = {dblp computer science bibliography, https://dblp.org}
}

@inproceedings{sensitivity,
  author       = {Michael Hahn and
                  Mark Rofin},
  editor       = {Lun{-}Wei Ku and
                  Andre Martins and
                  Vivek Srikumar},
  title        = {Why are Sensitive Functions Hard for Transformers?},
  booktitle    = {Proceedings of the 62nd Annual Meeting of the Association for Computational
                  Linguistics (Volume 1: Long Papers), {ACL} 2024, Bangkok, Thailand,
                  August 11-16, 2024},
  pages        = {14973--15008},
  publisher    = {Association for Computational Linguistics},
  year         = {2024},
  url          = {https://doi.org/10.18653/v1/2024.acl-long.800},
  doi          = {10.18653/V1/2024.ACL-LONG.800},
  timestamp    = {Sun, 19 Jan 2025 13:20:28 +0100},
  biburl       = {https://dblp.org/rec/conf/acl/0001R24.bib},
  bibsource    = {dblp computer science bibliography, https://dblp.org}
}

@article{transformers_survey,
  author       = {Lena Strobl and
                  William Merrill and
                  Gail Weiss and
                  David Chiang and
                  Dana Angluin},
  title        = {What Formal Languages Can Transformers Express? {A} Survey},
  journal      = {Trans. Assoc. Comput. Linguistics},
  volume       = {12},
  pages        = {543--561},
  year         = {2024},
  url          = {https://doi.org/10.1162/tacl\_a\_00663},
  doi          = {10.1162/TACL\_A\_00663},
  timestamp    = {Thu, 20 Jun 2024 17:09:46 +0200},
  biburl       = {https://dblp.org/rec/journals/tacl/StroblMW0A24.bib},
  bibsource    = {dblp computer science bibliography, https://dblp.org}
}

@inproceedings{Vaswani17,
  author       = {Ashish Vaswani and
                  Noam Shazeer and
                  Niki Parmar and
                  Jakob Uszkoreit and
                  Llion Jones and
                  Aidan N. Gomez and
                  Lukasz Kaiser and
                  Illia Polosukhin},
  editor       = {Isabelle Guyon and
                  Ulrike von Luxburg and
                  Samy Bengio and
                  Hanna M. Wallach and
                  Rob Fergus and
                  S. V. N. Vishwanathan and
                  Roman Garnett},
  title        = {Attention is All you Need},
  booktitle    = {Advances in Neural Information Processing Systems 30: Annual Conference
                  on Neural Information Processing Systems 2017, December 4-9, 2017,
                  Long Beach, CA, {USA}},
  pages        = {5998--6008},
  year         = {2017},
  url          = {https://proceedings.neurips.cc/paper/2017/hash/3f5ee243547dee91fbd053c1c4a845aa-Abstract.html},
  timestamp    = {Thu, 21 Jan 2021 15:15:21 +0100},
  biburl       = {https://dblp.org/rec/conf/nips/VaswaniSPUJGKP17.bib},
  bibsource    = {dblp computer science bibliography, https://dblp.org}
}

@inproceedings{ACY24,
  author = {Yang, Andy and Chiang, David and Angluin, Dana},
  booktitle = {Advances in Neural Information Processing Systems},
  editor = {A. Globerson and L. Mackey and D. Belgrave and A. Fan and U. Paquet and J. Tomczak and C. Zhang},
  pages = {10202--10235},
  publisher = {Curran Associates, Inc.},
  title = {Masked Hard-Attention Transformers Recognize Exactly the Star-Free Languages},
  url = {https://proceedings.neurips.cc/paper_files/paper/2024/file/13d7f172259b11b230cc5da8768abc5f-Paper-Conference.pdf},
  volume = {37},
  year = {2024}
}

@inproceedings{barcelo2023logical,
  author       = {Pablo Barcel{\'{o}} and
                  Alexander Kozachinskiy and
                  Anthony Widjaja Lin and
                  Vladimir V. Podolskii},
  title        = {Logical Languages Accepted by Transformer Encoders with Hard Attention},
  booktitle    = {{ICLR}},
  publisher    = {OpenReview.net},
  year         = {2024}
}

@inproceedings{SAL25,
  author       = {Marco S{\"{a}}lzer and
                  Eric Alsmann and
                  Martin Lange},
  title        = {Transformer Encoder Satisfiability: Complexity and Impact on Formal
                  Reasoning},
  booktitle    = {The Thirteenth International Conference on Learning Representations,
                  {ICLR} 2025, Singapore, April 24-28, 2025},
  publisher    = {OpenReview.net},
  year         = {2025},
  url          = {https://openreview.net/forum?id=VVO3ApdMUE}
}

@article{HAF22,
  author       = {Yiding Hao and
                  Dana Angluin and
                  Robert Frank},
  title        = {Formal Language Recognition by Hard Attention Transformers: Perspectives
                  from Circuit Complexity},
  journal      = {Trans. Assoc. Comput. Linguistics},
  volume       = {10},
  pages        = {800--810},
  year         = {2022},
  url          = {https://doi.org/10.1162/tacl\_a\_00490},
  doi          = {10.1162/TACL\_A\_00490},
  timestamp    = {Sat, 06 Jul 2024 18:17:12 +0200},
  biburl       = {https://dblp.org/rec/journals/tacl/HaoAF22.bib},
  bibsource    = {dblp computer science bibliography, https://dblp.org}
}

@Book{Mat93,
  title =        "Hilbert's Tenth Problem",
  author =       "Yuri V. Matiyasevich",
  publisher =    "MIT Press",
  address =      "Cambridge, Massachusetts",
  year =         "1993",
}

@article{ibarra1978reversal,
  title={Reversal-bounded multicounter machines and their decision problems},
  author={Ibarra, Oscar H},
  journal={Journal of the ACM (JACM)},
  volume={25},
  number={1},
  pages={116--133},
  year={1978},
  publisher={ACM New York, NY, USA}
}

@book{sipser-book,
  added-at = {2014-03-03T20:31:26.000+0100},
  address = {Boston, MA},
  author = {Sipser, Michael},
  biburl = {https://www.bibsonomy.org/bibtex/2a275d239d3a005a2a0825e49ce8dced5/ytyoun},
  edition = {Third},
  interhash = {ba5fd05e9f15a677c2c9e619c57de9a7},
  intrahash = {a275d239d3a005a2a0825e49ce8dced5},
  isbn = {113318779X},
  keywords = {automata complexity computation hamiltonian np-hardness sipser textbook},
  publisher = {Course Technology},
  refid = {814441519},
  timestamp = {2016-12-04T08:23:19.000+0100},
  title = {Introduction to the Theory of Computation},
  year = 2013
}

@article{DBLP:journals/siglog/Haase18,
  author       = {Christoph Haase},
  title        = {A survival guide to presburger arithmetic},
  journal      = {{ACM} {SIGLOG} News},
  volume       = {5},
  number       = {3},
  pages        = {67--82},
  year         = {2018},
  doi          = {10.1145/3242953.3242964},
}

@inproceedings{DBLP:conf/fsttcs/000124,
  author       = {Dmitry Chistikov},
  editor       = {Siddharth Barman and
                  Slawomir Lasota},
  title        = {An Introduction to the Theory of Linear Integer Arithmetic (Invited
                  Paper)},
  booktitle    = {44th {IARCS} Annual Conference on Foundations of Software Technology
                  and Theoretical Computer Science, {FSTTCS} 2024, December 16-18, 2024,
                  Gandhinagar, Gujarat, India},
  series       = {LIPIcs},
  volume       = {323},
  pages        = {1:1--1:36},
  publisher    = {Schloss Dagstuhl - Leibniz-Zentrum f{\"{u}}r Informatik},
  year         = {2024},
  doi          = {10.4230/LIPICS.FSTTCS.2024.1},
}

@inproceedings{framework,
  author       = {Xinting Huang and
                  Andy Yang and
                  Satwik Bhattamishra and
                  Yash Raj Sarrof and
                  Andreas Krebs and
                  Hattie Zhou and
                  Preetum Nakkiran and
                  Michael Hahn},
  title        = {A Formal Framework for Understanding Length Generalization in Transformers},
  booktitle    = {The Thirteenth International Conference on Learning Representations,
                  {ICLR} 2025, Singapore, April 24-28, 2025},
  publisher    = {OpenReview.net},
  year         = {2025},
  url          = {https://openreview.net/forum?id=U49N5V51rU},
  timestamp    = {Thu, 31 Jul 2025 12:00:41 +0200},
  biburl       = {https://dblp.org/rec/conf/iclr/HuangYBSKZN025.bib},
  bibsource    = {dblp computer science bibliography, https://dblp.org}
}

@inproceedings{anil22,
  author       = {Cem Anil and
                  Yuhuai Wu and
                  Anders Andreassen and
                  Aitor Lewkowycz and
                  Vedant Misra and
                  Vinay V. Ramasesh and
                  Ambrose Slone and
                  Guy Gur{-}Ari and
                  Ethan Dyer and
                  Behnam Neyshabur},
  editor       = {Sanmi Koyejo and
                  S. Mohamed and
                  A. Agarwal and
                  Danielle Belgrave and
                  K. Cho and
                  A. Oh},
  title        = {Exploring Length Generalization in Large Language Models},
  booktitle    = {Advances in Neural Information Processing Systems 35: Annual Conference
                  on Neural Information Processing Systems 2022, NeurIPS 2022, New Orleans,
                  LA, USA, November 28 - December 9, 2022},
  year         = {2022},
  url          = {http://papers.nips.cc/paper\_files/paper/2022/hash/fb7451e43f9c1c35b774bcfad7a5714b-Abstract-Conference.html},
  timestamp    = {Mon, 08 Jan 2024 16:31:26 +0100},
  biburl       = {https://dblp.org/rec/conf/nips/AnilWALMRSGDN22.bib},
  bibsource    = {dblp computer science bibliography, https://dblp.org}
}

@article{YC24,
  author       = {Andy Yang and
                  David Chiang},
  title        = {Counting Like Transformers: Compiling Temporal Counting Logic Into
                  Softmax Transformers},
  journal      = {CoRR},
  volume       = {abs/2404.04393},
  year         = {2024}
}

@article{vsm,
  author       = {Gerard Salton and
                  Anita Wong and
                  Chung{-}Shu Yang},
  title        = {A Vector Space Model for Automatic Indexing},
  journal      = {Commun. {ACM}},
  volume       = {18},
  number       = {11},
  pages        = {613--620},
  year         = {1975},
  url          = {https://doi.org/10.1145/361219.361220},
  doi          = {10.1145/361219.361220},
  timestamp    = {Sun, 06 Oct 2024 21:20:26 +0200},
  biburl       = {https://dblp.org/rec/journals/cacm/SaltonWY75.bib},
  bibsource    = {dblp computer science bibliography, https://dblp.org}
}

@book{kernel-book,
  added-at = {2010-06-21T06:21:28.000+0200},
  asin = {0521813972},
  author = {Shawe-Taylor, John and Cristianini, Nello},
  biburl = {https://www.bibsonomy.org/bibtex/21d17b9d54a4ef23864227f647622d071/ahmedjawwad4u},
  description = {Amazon.com: Kernel Methods for Pattern Analysis (9780521813976): John Shawe-Taylor, Nello Cristianini: Books},
  dewey = {006.31},
  ean = {9780521813976},
  edition = {illustrated edition},
  interhash = {ba08fe16ecfaaef17b41d9e432a4341e},
  intrahash = {1d17b9d54a4ef23864227f647622d071},
  isbn = {0521813972},
  keywords = {kernel-methods},
  publisher = {Cambridge University Press},
  timestamp = {2010-06-21T06:21:29.000+0200},
  title = {Kernel Methods for Pattern Analysis},
  url = {http://www.amazon.com/Kernel-Methods-Pattern-Analysis-Shawe-Taylor/dp/0521813972},
  year = 2004
}

@inproceedings{gvsm,
  author       = {S. K. Michael Wong and
                  Wojciech Ziarko and
                  P. C. N. Wong},
  editor       = {Jean Tague},
  title        = {Generalized Vector Space Model in Information Retrieval},
  booktitle    = {Proceedings of the 8th annual international {ACM} {SIGIR} conference
                  on Research and development in information retrieval, Montr{\'{e}}al,
                  Qu{\'{e}}bec, Canada, June 5-7, 1985},
  pages        = {18--25},
  publisher    = {{ACM}},
  year         = {1985},
  url          = {https://doi.org/10.1145/253495.253506},
  doi          = {10.1145/253495.253506},
  timestamp    = {Tue, 06 Nov 2018 11:07:24 +0100},
  biburl       = {https://dblp.org/rec/conf/sigir/WongZW85.bib},
  bibsource    = {dblp computer science bibliography, https://dblp.org}
}

@article{bow,
  added-at = {2020-05-20T16:56:27.000+0200},
  author = {Harris, Zellig},
  biburl = {https://www.bibsonomy.org/bibtex/252e7950c31610617170d71c320f2252e/ghagerer},
  doi = {10.1007/978-94-009-8467-7_1},
  interhash = {a23596808b6273076e1259dedca16330},
  intrahash = {52e7950c31610617170d71c320f2252e},
  journal = {Word},
  keywords = {bag-of-words},
  number = {2-3},
  pages = {146--162},
  publisher = {Taylor \& Francis},
  timestamp = {2020-06-24T14:53:20.000+0200},
  title = {Distributional structure},
  url = {https://link.springer.com/chapter/10.1007/978-94-009-8467-7_1},
  volume = 10,
  year = 1954
}

@inproceedings{raspl,
  author       = {Hattie Zhou and
                  Arwen Bradley and
                  Etai Littwin and
                  Noam Razin and
                  Omid Saremi and
                  Joshua M. Susskind and
                  Samy Bengio and
                  Preetum Nakkiran},
  title        = {What Algorithms can Transformers Learn? {A} Study in Length Generalization},
  booktitle    = {The Twelfth International Conference on Learning Representations,
                  {ICLR} 2024, Vienna, Austria, May 7-11, 2024},
  publisher    = {OpenReview.net},
  year         = {2024},
  url          = {https://openreview.net/forum?id=AssIuHnmHX},
  timestamp    = {Wed, 07 Aug 2024 17:11:53 +0200},
  biburl       = {https://dblp.org/rec/conf/iclr/ZhouBLRSSBN24.bib},
  bibsource    = {dblp computer science bibliography, https://dblp.org}
}

@article{MerrillS23,
  author       = {William Merrill and
                  Ashish Sabharwal},
  title        = {The Parallelism Tradeoff: Limitations of Log-Precision Transformers},
  journal      = {Trans. Assoc. Comput. Linguistics},
  volume       = {11},
  pages        = {531--545},
  year         = {2023},
  url          = {https://doi.org/10.1162/tacl\_a\_00562},
  doi          = {10.1162/TACL\_A\_00562}
}

@inproceedings{LiC25,
title={Characterizing the Expressivity of Fixed-Precision Transformer Language Models},
author={Jiaoda Li and Ryan Cotterell},
booktitle={The Thirty-ninth Annual Conference on Neural Information Processing Systems},
year={2025},
url={https://openreview.net/forum?id=29LwAgLFpj}
}

@book{zipf35,
  title     = {The Psychobiology of Language: An Introduction to Dynamic Philology},
  author    = {Zipf, George Kingsley},
  year      = {1935},
  publisher = {Houghton Mifflin},
  address   = {Boston, MA}
}

@inproceedings{MS23-nips,
  author       = {William Merrill and
                  Ashish Sabharwal},
  editor       = {Alice Oh and
                  Tristan Naumann and
                  Amir Globerson and
                  Kate Saenko and
                  Moritz Hardt and
                  Sergey Levine},
  title        = {A Logic for Expressing Log-Precision Transformers},
  booktitle    = {Advances in Neural Information Processing Systems 36: Annual Conference
                  on Neural Information Processing Systems 2023, NeurIPS 2023, New Orleans,
                  LA, USA, December 10 - 16, 2023},
  year         = {2023},
  url          = {http://papers.nips.cc/paper\_files/paper/2023/hash/a48e5877c7bf86a513950ab23b360498-Abstract-Conference.html},
  timestamp    = {Fri, 01 Mar 2024 16:26:20 +0100},
  biburl       = {https://dblp.org/rec/conf/nips/MerrillS23.bib},
  bibsource    = {dblp computer science bibliography, https://dblp.org}
}

@inproceedings{SLY19,
  author       = {Omid Shahmirzadi and
                  Adam Lugowski and
                  Kenneth Younge},
  editor       = {M. Arif Wani and
                  Taghi M. Khoshgoftaar and
                  Dingding Wang and
                  Huanjing Wang and
                  Naeem Seliya},
  title        = {Text Similarity in Vector Space Models: {A} Comparative Study},
  booktitle    = {18th {IEEE} International Conference On Machine Learning And Applications,
                  {ICMLA} 2019, Boca Raton, FL, USA, December 16-19, 2019},
  pages        = {659--666},
  publisher    = {{IEEE}},
  year         = {2019},
  url          = {https://doi.org/10.1109/ICMLA.2019.00120},
  doi          = {10.1109/ICMLA.2019.00120},
  timestamp    = {Sat, 30 Sep 2023 09:46:00 +0200},
  biburl       = {https://dblp.org/rec/conf/icmla/ShahmirzadiLY19.bib},
  bibsource    = {dblp computer science bibliography, https://dblp.org}
}

@inproceedings{yang2025kneedeep,
title={Knee-Deep in C-{RASP}: A Transformer Depth Hierarchy},
author={Andy Yang and Micha{\"e}l Cadilhac and David Chiang},
booktitle={The Thirty-ninth Annual Conference on Neural Information Processing Systems},
year={2025},
url={https://openreview.net/forum?id=jPduiyxyfw}
}

@inproceedings{succinct,
title={Transformers are Inherently Succinct},
author={Pascal Bergstr{\"a}{\ss}er and Ryan Cotterell and Anthony Widjaja Lin},
booktitle={The Fourteenth International Conference on Learning Representations},
year={2026},
url={https://openreview.net/forum?id=Yxz92UuPLQ}
}

@misc{yang26,
    title = {Length Generalization Bounds for Transformers},
    author = {Andy Yang and Pascal Bergsträßer and Georg Zetzsche and David
        Chiang and Anthony Lin},
    year = 2026,
    note = {Under submission (preprint: \url{https://zenodo.org/records/18800700})},
    doi = "10.5281/zenodo.18800700"
}
\bibliographystyle{iclr2026_conference}

\appendix

\section{Translating semialgebraic sets to $\AHATem$}\label{app:semialg-to-ahat}
\subsection{Fine-grained analysis of polynomial degree vs. depth}
In this subsection, we show the inclusion $\SemiAlg\subseteq\AHATem[U]$. In fact, we show a stronger statement (\cref{semialg-parametric}), which requires some notation.
By $\SemiAlg[\leq\ell]$ we denote the restriction of the class $\SemiAlg$ to the semi-algebraic languages $L\subseteq\Sigma^*$ such that the underlying semi-algebraic set $S\subseteq\N^m$ is a Boolean combination of sets $S_p$ where $p\in\Z[X_1,\ldots,X_m]$ are polynomials of degree $\leq\ell$. In particular, we have $\SemiAlg[\leq1]=\QFPA$. Our construction for $\SemiAlg\subseteq\AHATem[U]$ actually shows the following:
\begin{proposition}\label{semialg-parametric}
  For each $\ell>0$ we have $\SemiAlg[\leq\ell]\subseteq\AHATem[\leq\ell,U]$.
\end{proposition}

For showing \cref{semialg-parametric}, we need some more technical definitions. Let $T$ be an AHAT with input embedding $\emb\colon\Sigma\to\Q^{d_1}$ and layers $\lambda_1\colon(\Q^{d_1})^*\to(\Q^{d_2})^*,\ldots,\lambda_\ell\colon(\Q^{d_\ell})^*\to(\Q^{d_{\ell+1}})^*$. We define the function $f_T\colon\Sigma^+\to\Q$ as follows: for a word $w=a_1a_2\ldots a_n\in\Sigma^+$, if $\lambda_1\circ\cdots\circ\lambda_\ell(\emb(a_1),\ldots,\emb(a_n))=(\by_1,\ldots,\by_n)$, then $f_T(w)=\by_n[1]$. In other words, we have $f_T(w)>0$ iff $T(w)=1$.

\begin{proposition}\label{app:polynomial-inequality}
  For every polynomial $p\in\Z[X_1,\ldots,X_m]$ of degree $\ell$, the language $L_{p>0}=\{w\in\{\lta_1,\ldots,\lta_m\}^*\mid p(\Psi(w))>0\}$ belongs to $\AHATem[\leq\ell,U]$.
\end{proposition}

To show \cref{polynomial-inequality}, we will use polynomials that are \emph{homogeneous},
meaning all monomials have the same degree. Note that given an arbitrary
polynomial $p\in\Z[X_1,\ldots,X_m]$ of degree $\ell$, we can consider the
polynomial $q\in\Z[X_0,\ldots,X_m]$ with
$q=X_0^dp(\tfrac{X_1}{X_0},\ldots,\tfrac{X_m}{X_0})$, which is homogeneous. It
has the property that $p(x_1,\ldots,x_m)>0$ if and only if
$q(1,x_1,\ldots,x_m)>0$. Therefore, from now on, we assume that we have a homogeneous polynomial $q\in\Z[X_0,\ldots,X_m]$ and want to construct an AHAT for the language $K_q=\{w\in\{\lta_1,\ldots,\lta_m\}^* \mid \text{$q(1,\bx)>0$ for $\bx=\Psi(w)$}\}$.

To simplify notation, we denote the end marker $\$$ by $\lta_0$.
Thus, the input will be a string $w\in\{\lta_0,\ldots,\lta_m\}^+$
that contains $\lta_0$ exactly once, at the end.  Since $|w|_{\lta_0}=1$ is
satisfied automatically, our AHAT only has to check that $q(x_0,\ldots,x_m)>0$,
where $x_i=|w|_{\lta_i}$.
The input encoding is the map $\{\lta_0,\ldots,\lta_m\}^*\to \Q^m$ with
$\lta_i\mapsto\be_i$, where $\be_i\in\Q^m$ is the $i$-th unit vector.

In a first lemma we show that each monomial of $q$ can be computed by a NoPE-AHAT with $\ell$ uniform attention layers.

\begin{lemma}\label{lem:monomialToAHAT}
  For every monomial $r\in\Z[X_0,X_1,\ldots,X_m]$ of degree $\ell$, there is a NoPE-AHAT $T$ with $\ell$ uniform attention layers such that
  \[f_T(w)=\frac{r(\Psi(w))}{|w|^\ell}\]
  for each word $w\in\Sigma^*$. In particular, we have $f_T(w\$)>0$ if and only if $r(\Psi(w))>0$.
\end{lemma}
\begin{proof}
  We use the word embedding $\emb\colon\Sigma\to\Q^{m+1}$ with $\emb(\lta_i)=\be_i$ for each $i\in[0,m]$.
  
  \paragraph{Step I: Compute frequencies}
  Our AHAT first uses an attention layer to compute $m+1$ new components, where
  $i$-th component holds $\tfrac{x_i}{n+1}$, where $n+1$ is the length of the
  input (including the end marker). This is easily done by attending to all
  positions and computing the averages of the first $m+1$ components.
  To simplify notation, we will index vectors starting with index $0$.

  \paragraph{Step II: Multiplication gadgets}
  Second, we have a sequence of gadgets (each consisting of one uniform attention layer and one ReLU layer). Each gadget introduces a new component, and does not change the existing components.
  Between gadget executions, the following additional invariants are upheld:
  \begin{enumerate*}[label=(\roman*)]
    \item Overall, a gadget does not change existing components: it introduces one new component.
    \item The components $\{0,\ldots,m\}$ are called the \emph{initial} components.
    \item All other components are \emph{uniform}, i.e.\ they are the same across all positions. 
    \item The uniform components carry values in $[0,1]$.
  \end{enumerate*}
  Thus, we will call components $0,\ldots,m$ the \emph{initial} components; and we call components $>m$ the \emph{uniform} components.
  
  Our gadgets do the following. Suppose we have already produced $k$ additional components. For each initial component $i\in[0,m]$ and uniform component $j\in[m+1,m+1+k]$, gadget $\omult(k,i,j)$, which introduces a new component, will carry the value $\frac{x_i\cdot y_j}{n+1},$
  where $y_j$ is the value in component $j$ of all vectors. Recall that we use $x_i$ to denote the number of $\lta_i$ 
  occurrences in the input for $i\in[0,m]$.
  
  We implement the gadget $\omult(k,i,j)$ using some ReLU layers and an
  attention layer. Suppose that before, we have the vector
  $\bu_p\in\Q^{m+1+k}$ in position $p$.  First, using ReLU layers, we
  introduce a new component that in position $p$ has the value $\bu_p[i]\cdot
  \bu_p[j]$. This can be achieved since $\bu_p[i]$ is in $\{0,1\}$ and
  $\bu_p[j]\in[0,1]$: Notice that $\bu_p[i]\cdot\bu_p[j]=\ReLU(\bu_p[j]-(1-\bu_p[i]))$.
  Indeed, if $\bu_p[i]=1$, then this evaluates to $\bu_p[j]$; if
  $\bu_p[i]=0$, then we get $\ReLU(\bu_p[j]-1)=0$.
  We then use uniform attention to compute the average of this new
  $\bu_p[i]\cdot\bu_p[j]$-component across all vectors. Since there are $n+1$
  vectors, exactly $x_i$ of them have $\bu_p[i]=1$, and also $\bu_p[j]=y_j$, we get the desired
  $\tfrac{x_i\cdot y_j}{n+1}$.
  
  \paragraph{Step III: Computing the monomial}
  We now use our gadgets to compute the value of the monomial. Let $r(X_0,\ldots,X_m)=\alpha\cdot X_{i_1}\cdots X_{i_\ell}$. We use $\ell-1$ gadgets to compute
  $x_{i_1}\cdots x_{i_\ell}/(n+1)^\ell$: The frequency computation in the beginning
  yields $x_{i_1}/(n+1)$, and then we use gadgets to compute
  $x_{i_1}x_{i_2}/(n+1)^2$, $x_{i_1}x_{i_2}x_{i_3}/(n+1)^3$, etc.\ until
  $x_{i_1}\cdots x_{i_\ell}/(n+1)^\ell$. Finally, we use a ReLU layer to multiply
  $x_{i_1}\cdots x_{i_\ell}/(n+1)^\ell$ with $\alpha$. Thus, we have computed $r(x_0,\ldots,x_m)/(n+1)^\ell$.
\end{proof}

\subsection{Combining $\AHATem[U]$ without additional layers}\label{app:union-intersection}
The following lemma states that two NoPE-AHAT with only uniform attention layers can be parallelized resulting in a NoPE-AHAT with the same number of uniform layers. Their outputs can also be combined via a ReLU neural network. In particular, $\AHATem[\le \ell,U]$ is closed under union and intersection.

\begin{lemma}\label{lem:parallel}
  Let $T_1,T_2$ be two NoPE-AHAT with $\ell$ uniform attention layers and let $\mathcal{N}$ be a ReLU neural network computing a function $\mathcal{N}\colon\Q^2\to\Q$. Then there is a NoPE-AHAT $T_\mathcal{N}$ with $\ell$ uniform attention layers computing $f_{T_{\mathcal{N}}}(w\$)=\mathcal{N}(f_{T_1}(w\$),f_{T_2}(w\$))$.
\end{lemma}
\begin{proof}
  The idea of $T_{\mathcal{N}}$ is, that it concatenates the components from $T_1$ with those of $T_2$ and keeps the sets of components always disjoint. By uniformity we are able to apply the attention layers of $T_1$ and $T_2$ in parallel. In the last attention layer we can simply apply $\mathcal{N}$ to the first components of $T_1$ and $T_2$.
  
  By $\emb_i\colon\Sigma\to\Q^{d_{1,i}}$ we denote the word embedding of $T_i$. From this we construct a new word embedding $\emb\colon\Sigma\to\Q^{d_{1,1}+d_{1,2}}$ with $\emb(\lta_j)=(\emb_1(\lta_j),\emb_2(\lta_j))$ for each $j\in[0,m]$.
  
  Now, let $\lambda_{k,i}\colon\Q^{d_{k,i}}\to\Q^{d_{k+1,i}}$ be the $k$th layer of $T_i$ for $1\leq k\leq\ell$. By $K_i,Q_i,V_i$, and $\mathcal{N}_i$ we denote the parameters of $\lambda_{k,i}$. Since $\lambda_{k,i}$ is uniform, the key and query maps $K_i$ and $Q_i$ are constantly mapping to zero. We now construct a uniform layer $\lambda_k\colon \Q^{d_{k,1}+d_{k,2}}\to\Q^{d_{k+1,1}+d_{k+1},2}$ composed of $\lambda_{k,1}$ and $\lambda_{k,2}$: the key and query maps $K$ and $Q$ still map to zero. If $V_i(\bx_i)=A_i\bx+\bbb_i$ then we define the new value map $V$ by
  \[V\binom{\bx_1}{\bx_2}=\begin{pmatrix}
    A_1 & \bzero\\
    \bzero & A_2
  \end{pmatrix}\binom{\bx_1}{\bx_2}+\binom{\bbb_1}{\bbb_2}=\binom{A_1\bx_1+\bbb_1}{A_2\bx_2+\bbb_2}=\binom{V_1(\bx_1)}{V_2(\bx_2)}\,.\]
  By this definition we obtain that the attention vectors $\ba_j$ in $\lambda_k$ are the concatenation of the attention vectors $\ba_{j,1}$ and $\ba_{j,2}$ in $\lambda_{k,1}$ resp.\ $\lambda_{k,2}$.
  Similarly, we build the composition of $\mathcal{N}_1$ and $\mathcal{N}_2$ resulting in an FFN computing $\binom{\mathcal{N}_1(\bx_{j,1},\ba_{j,1})}{\mathcal{N}_2(\bx_{j,2},\ba_{j,2})}$.
  
  Finally, in the last layer, we add the FFN $\mathcal{N}'$ that takes the first components of the output of $\mathcal{N}_i(\bx_{j,i},\ba_{j,i})$ and simulates $\mathcal{N}$ on these two numbers.
\end{proof}

\begin{figure}
  \begin{center}
    \newcommand{\seq}[2][]{
  \draw[shift={#2},#1] (0,0) rectangle (3.6,0.6);
  \draw[step=0.6,shift={#2}] (0,0) grid (3.6,0.6);
}

\newcommand{\pwl}[3][]{
  \draw[->,shift={#2}] (1.8,2.1) -- (1.8,1.6);
  \draw[shift={#2},#1] (1,0) rectangle (2.6,1.6);
  \node[shift={#2}] at (1.8,0.8) {#3};
  \draw[->,shift={#2}] (1.8,0) -- (1.8,-0.5);
}

\newcommand{\pwlp}[5]{
  \draw[->,shift={#1}] (1.8,2.1) -- (1.8,1.85) -- (0.8,1.85) -- (0.8,1.6);
  \draw[shift={#1},#2] (0,0) rectangle (1.6,1.6);
  \node[shift={#1}] at (0.8,0.8) {#3};
  \draw[->,shift={#1}] (1.6,0.8) -- (2,0.8);
  \draw[shift={#1},#4] (2,0) rectangle (3.6,1.6);
  \node[shift={#1}] at (2.8,0.8) {#5};
  \draw[->,shift={#1}] (2.8,0) -- (2.8,-0.25) -- (1.8,-0.25) -- (1.8,-0.5);
}

\newcommand{\al}[2][]{
  \draw[shift={#2},#1] (0,0) rectangle (3.6,3.6);
  \draw[step=0.6,shift={#2}] (0,0) grid (3.6,3.6);
  \draw[shift={#2},#1] (-0.8,0) rectangle (-0.2,3.6);
  \draw[step=0.6,xshift=-0.8cm,shift={#2}] (0,0) grid (0.6,3.6);
  \node[shift={#2},fill=white,inner sep=2pt,#1] at (1.8,1.8) {\Huge$0$};
}
\usetikzlibrary{patterns,patterns.meta,arrows.meta}
\begin{tikzpicture}[>={Latex[length=1mm]},scale=0.5,every node/.append style={scale=0.5}]
  \node[anchor=east] at (0,1.8) {\Large$T_1\colon$};
  \seq{(0,1.5)}
  \draw[->] (1.8,1.5) -- node[right] {\Large$\emb_1$} (1.8,0.6);
  \seq{(0,0)}
  \al[fill=red!30]{(0,-3.8)}
  \node[anchor=west] at (3.6,-2) {\Large$\rightsquigarrow\vec{a}_1$};
  \pwl[fill=red!30]{(0,-5.9)}{\Large$\mathcal{N}_1$}
  \seq{(0,-7)}
  
  \node[anchor=east] at (6,1.8) {\Large$T_2\colon$};
  \seq{(6,1.5)}
  \draw[->] (7.8,1.5) -- node[right] {\Large$\emb_2$} (7.8,0.6);
  \seq{(6,0)}
  \al[fill=blue!30]{(6,-3.8)}
  \node[anchor=west] at (9.6,-2) {\Large$\rightsquigarrow\vec{a}_2$};
  \pwl[fill=blue!30]{(6,-5.9)}{\Large$\mathcal{N}_2$}
  \seq{(6,-7)}
  
  \node at (12,-3) {\Large$\Rightarrow$};
  
  \node[anchor=east] at (14,1.8) {\Large$T\colon$};
  \seq{(14,1.5)}
  \draw[->] (15.8,1.5) -- node[right] {\Large$\binom{\emb_1}{\emb_2}$} (15.8,0.6);
  \seq{(14,0)}
  \al[fill=red!30,postaction={pattern={Lines[distance=0.3cm,line width=0.15cm,angle=0,yshift=-0.05cm]},pattern color=blue!30}]{(14,-3.8)}
  \node[anchor=west] at (17.6,-2) {\Large$\rightsquigarrow\binom{\vec{a}_1}{\vec{a}_2}$};
  \pwlp{(14,-5.9)}{fill=red!30}{\Large$\binom{\mathcal{N}_1}{1}$}{fill=blue!30}{\Large$\binom{1}{\mathcal{N}_2}$}
  \seq{(14,-7)}
\end{tikzpicture}
  \end{center}
  \caption{Visualization of the proof of \cref{lem:parallel}.}
\end{figure}

Recall that from a polynomial $p\in\Z[X_1,\ldots,X_m]$ we constructed a homogeneous polynomial $q\in\Z[X_0,X_1,\ldots,X_m]$ such that $p(\bx)>0$ if and only if $q(1,\bx)>0$ holds for all vectors $\bx\in\Q^m$. Let $r_1,\ldots,r_k\in\Z[X_0,X_1,\ldots,X_m]$ be the monomials in $q$. Since $q$ is homogeneous, all monomials have the same degree $\ell$. \cref{lem:monomialToAHAT} yields NoPE-AHATs $T_1,\ldots,T_k$ that are computing the monomials $r_i$. Each of these AHATs has exactly $\ell$ uniform attention layers. Finally, we can apply \cref{lem:parallel} to construct a NoPE-AHAT $T$ with $\ell$ uniform layers computing $f_T(w\$)=\frac{q(\Psi(w\$))}{|w\$|^\ell}$ (since addition is an affine map). Then $T$ accepts $w$ iff $\frac{q(\Psi(w\$))}{|w\$|^\ell}>0$ iff $q(\Psi(w\$))>0$ iff $p(\Psi(w))>0$. In other words, $T$ accepts the language $L_{p>0}$.

\subsection{Inexpressibility of $\PARITY$}\label{app:parity}

\revised{
\begin{proof}[Proof of \cref{result-parity}]
By \cref{main-result-semialgebraic}, it 
suffices to show that $\PARITY$ is
	not semi-algebraic. Suppose it is. Then there is a disjunction of
	conjunctions of polynomial inequalities that characterizes $\PARITY$.
	The polynomials are over $\Z[X,Y]$, where $X$ is the variable for
	$\lta$'s and $Y$ is the variable for $\ltb$'s. By plugging in $Y=0$, we
	conclude that the set of even numbers is semi-algebraic. Hence, there is
	a disjunction $\bigvee_{i=1}^n\bigwedge_{j=1}^m p_{i,j}(X)>0$ of
	conjunctions that is satisfied exactly for the even numbers. This
	implies that for some $i$, there are infinitely many even numbers $k$
	such that $\bigwedge_{j=1}^m p_{i,j}(k)>0$. Therefore, for every
	$j\in[1,m]$, the leading coefficient of $p_{i,j}$ must be positive. But
	then, $\bigwedge_{j=1}^m p_{i,j}(k)>0$ must hold for all sufficiently
	large $k$, not just the even ones, a contradiction.
\end{proof}
}



\section{Parametric analysis}\label{app:parametric}
In this section, we study how the expressive power of NoPE-AHAT[U] and SMAT
depends on the number of attention layers. In particular, we show that
\cref{main-result-universality,main-result-undecidability} hold already in the
case of two layers.  The main insight of this proof is that the number of
layers needed to express a semialgebraic set depends on the degrees of the
involved polynomials (see \cref{semialg-parametric}): 
Note that our sketch of an $\AHATem$ for $L_{p>0}$ in \cref{sec:semialgebraic}
directly yields a $\AHATem$ with $\ell$ layers, where $\ell$ is the degree of
$p$. For \cref{semialg-parametric}, one then has to show that Boolean
combinations of such sets can be expressed without growing the number of
attention layers. See \cref{app:semialg-to-ahat} for details.

\paragraph{Capturing $\RE$ with two layers}
From \cref{semialg-parametric}, we can now deduce the two-attention-layer version of \cref{main-result-universality,main-result-undecidability}. The first
ingredient is the following version of the MRDP theorem on Diophantine sets~\citep{Mat93}:
\begin{theorem}\label{equivalence-re-diophantine}
Let $\Sigma=\{\lta_1,\ldots,\lta_m\}$. A language $L\subseteq\Sigma^*$ belongs to $\RE\cap\PI$ if and only if there is a $k\in\N$ and a polynomial $p\in\Z[X_1,\ldots,X_{m+k}]$ such that $L=\pi_{\lta_1,\ldots,\lta_m}(K)$, where
\[ K=\{w\in\{\lta_1,\ldots,\lta_{m+k}\}^* \mid p(\Parikh(w))=0 \}. \]
\end{theorem}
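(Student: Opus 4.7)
\textbf{Proof plan for Theorem \ref{equivalence-re-diophantine}.}
The plan is to reduce the claim to the classical MRDP theorem, which states that every recursively enumerable subset of $\N^n$ is Diophantine, i.e.\ the projection of the zero-set of some integer polynomial. The statement to prove is essentially the ``wordified'' counterpart, obtained by passing through the Parikh map; the main work is bookkeeping to translate between subsets of $\N^m$ and permutation-invariant word languages.

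For the easy direction ($\Leftarrow$), I would start from a polynomial $p \in \Z[X_1, \ldots, X_{m+k}]$ and the language
$K=\{w\in\{\lta_1,\ldots,\lta_{m+k}\}^* \mid p(\Parikh(w))=0\}$. The membership problem for $K$ is decidable: given $w$, compute $\Parikh(w)$ and evaluate $p$. Hence $K\in\RE$, and $K$ is permutation-invariant because $\Parikh$ is. Projections clearly preserve both being recursively enumerable and being permutation-invariant (the latter because deleting a fixed set of letters commutes with reshuffling the remaining ones). Thus $\pi_{\lta_1,\ldots,\lta_m}(K) \in \RE\cap\PI$.

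For the hard direction ($\Rightarrow$), given $L\in\RE\cap\PI$, I would set $S := \Parikh(L) \subseteq \N^m$. The set $S$ is recursively enumerable: enumerate the words of $L$ and output their Parikh images. By MRDP, there exist $k\in\N$ and $p\in\Z[X_1,\ldots,X_{m+k}]$ such that
$S = \{\bx\in\N^m \mid \exists \by\in\N^k\colon p(\bx,\by)=0\}$. Now define $K$ as in the statement. To prove $\pi_{\lta_1,\ldots,\lta_m}(K) = L$, I would argue both inclusions via Parikh images and permutation-invariance. If $w\in K$ with $\Parikh(w)=(\bx,\by)$, then $p(\bx,\by)=0$, so $\bx\in S=\Parikh(L)$; since $\pi_{\lta_1,\ldots,\lta_m}(w)$ has Parikh image $\bx$ and $L$ is permutation-invariant, $\pi_{\lta_1,\ldots,\lta_m}(w)\in L$. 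Conversely, for $u\in L$ with $\Parikh(u)=\bx$, pick witnesses $\by\in\N^k$ with $p(\bx,\by)=0$; then form $w$ by appending (or interleaving) $y_1$ copies of $\lta_{m+1}$, \ldots, $y_k$ copies of $\lta_{m+k}$ to $u$. Then $w\in K$ and $\pi_{\lta_1,\ldots,\lta_m}(w)=u$.

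The main obstacle is not ours to overcome: it is the MRDP theorem itself, which we cite as a black box. The remaining work is the correspondence between permutation-invariant word languages and subsets of $\N^m$ via the Parikh map; the only mildly delicate point is that $\pi_{\lta_1,\ldots,\lta_m}(w)=u$ must hold literally as words (not just up to permutation), which is why one constructs $w$ from $u$ by insertion rather than by, say, writing the canonical form $\lta_1^{x_1}\cdots\lta_{m+k}^{y_k}$.
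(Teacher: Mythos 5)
Your proposal is correct and matches the paper's approach: the paper states this theorem as a direct consequence of the MRDP theorem without further detail, and your write-up simply fills in the routine translation between recursively enumerable permutation-invariant languages and recursively enumerable subsets of $\N^m$ via the Parikh map. Both directions are sound, including the two mildly delicate points you flag (permutation-invariance of $L$ is needed for $\pi(K)\subseteq L$, and the witness word for $\supseteq$ must be built by appending the auxiliary letters to $u$ so that the projection returns $u$ literally).
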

In other words, every language in $\RE\cap\PI$ is a projection of a language of
the form $L_p=\{w\in\{\lta_1,\ldots,\lta_m\}^*\mid p(\Psi(w))=0\}$, where
$p\in\Z[X_1,\ldots,X_m]$ is a polynomial. 
Thus, it suffices
to place $L_p$ in $\Proj(\AHATem[\le 2,U])$.
%
First observe that in 
\cref{main-result-semialgebraic}, we use one attention layer for each
multiplication, so this avenue is closed if we want to stay within two
attention layers. Instead, we use that for every polynomial
$p\in\Z[X_1,\ldots,X_m]$, there are \emph{quadratic} (i.e.\ degree $\le 2$)
polynomials $q_1,\ldots,q_r\in\Z[X_1,\ldots,X_{m+k}]$ for some $r,k\ge 0$ such
that for $\bx\in\N^m$, we have $p(\bx)=0$ if and only if there is some
$\by\in\N^k$ with $q_1(\bx,\by)=0,\ldots,q_r(\bx,\by)=0$: Just
introduce a fresh variable for each multiplication in $p$ and use the $q_i$
to assign these fresh variables. Since the language $K:=\{w\in\{\lta_1,\ldots,\lta_{m+k}\}^* \mid q_1(\Psi(w))=\cdots=q_{r}(\Psi(w))\}$ belongs to $\SemiAlg[\le 2]$ (since the $q_i$ have degree $\le 2$) and $L_p$ is a projection of $K$, this means $L_p$ belongs to $\Proj(\SemiAlg[\le 2])$. By \cref{semialg-parametric}, $\Proj(\SemiAlg[\le 2])\subseteq\Proj(\AHATem[\le 2,U])$.


\paragraph{NoPE AHAT with a single layer} The fact that two layers suffice for universality among counting properties raises the question of whether this is even possible with a single attention layer. We show here that this is not the case:
\begin{restatable}{theorem}{AHATemEqQFPA}\label{result-end-marker-one-layer}
$\AHATem[\le 1]=\AHATem[\leq1,U]=\QFPA$.
\end{restatable}
This means, with a single attention layer, NoPE-AHAT can recognize precisely
those counting properties expressible using quantifier-free Presburger
formulas.  Since satisfiability of Presburger arithmetic is well-known to be
decidable~\citep{DBLP:journals/siglog/Haase18,DBLP:conf/fsttcs/000124}, this
implies that universality and undecidability of $\AHATem$ (as we have shown for
two attention layers), do not hold with just one attention layer.  However, we
leave open whether $\SMAT$ with one attention layer have a decidable emptiness
problem.

 Before going into details, let us sketch the proof of
\cref{result-end-marker-one-layer}. For the inclusion $\AHATem[\le
1]\subseteq\QFPA$, we proceed similarly to \cref{ahat-to-semi-algebraic}, while
observing that the inequalities we have to verify are all linear inequalities:
This is because a single attention layer averages only once. Conversely, for
the inclusion $\QFPA\subseteq\AHATem[\le 1,U]$ follows easily from \cref{semialg-parametric}.
\begin{proof}[Proof of \cref{result-end-marker-one-layer}]
	We begin by proving that $\AHATem[\le 1] \subseteq \QFPA$. 
	Let $T$ be an AHAT with input embedding $\iota : \Sigma \cup \{\$\} \to \mathbb{Q}^d$, 
	a single AHA layer $\lambda$ utilising affine maps 
	$Q, K \in \mathbb{Q}^{m \times d}$, $V \in \mathbb{Q}^{k\times d}$, given as matrices, 
	and the ReLU network $\mathcal{N} : \mathbb{Q}^{d+k} \to \mathbb{Q}^e$.    
	Our goal is to construct a quantifier-free PA formula $\varphi_T$ with variables $x_i$ for $i \in \{1, \dotsc, |\Sigma|\}$ such that 
	$\Parikh^{-1}(\llbracket \varphi \rrbracket) = \{w \in \Sigma^* \mid T \text{ accepts } w\$\}$. 
	In the following, we assume  
	$\Sigma = \{a_1, \dotsc, a_m \}$ and denote $\Sigma \cup \{\$\}$ by $\Sigma'$. 
	
	First, we observe that for all words $w \in \Sigma^*$, the output of $T$ given $w\$$ is computed by
	\begin{displaymath}
	\mathcal{N}\left(\iota(\$), \frac{1}{|w\$|_{a_{i_1}} + \dotsb + |w\$|_{a_{i_h}}}
	\sum_{j=1}^h |w\$|_{a_{i_j}} V \iota(a_{i_j})\right),
	\end{displaymath}
	where $\Gamma = \{a_{i_1}, \dotsc, a_{i_h}\} \subseteq \Sigma'$
	is exactly the subset of symbols $a_{i_j}$ occurring in $w\$$ that maximise
	$\langle Q\iota(a_{i_j}), K\iota(\$) \rangle$.
	We construct $\varphi_T$ such that it mirrors exactly this computational structure.
	We have $\varphi_T = \bigvee_{\Gamma \subseteq \Sigma'} \varphi_\Gamma$, where $\bigvee$ ranges over those subsets $\Gamma$
	where $\langle Q\iota(a_{i_j}), K\iota(\$) \rangle$ is maximal for precisely the $a_{i_j} \in \Gamma$.
	The subformula $\varphi_\Gamma$ is defined as follows. 
	For now, we assume that $\$ \notin \Gamma$ and introduce some auxiliary formulas. Throughout the
	following construction steps, we assume that atomic formulas are normalised to the form
	$c_1 x_1 + \dotsb + c_n x_n \leq b$.
	
	Given the ReLU network $\mathcal{N}$, it is straightforward to construct a quantifier-free PA formula
	$\varphi^\mathcal{N}$ such that $\llbracket \varphi^\mathcal{N} \rrbracket$ exactly includes those $x_1, \dotsc, x_{d+k} \in \mathbb{N}^{d+k}$
	satisfying $\mathcal{N}(x_1, \dotsc, x_{d+k})_1 > 0$, where $\mathcal{N}(\cdot)_1$ denotes the first output dimension of $\mathcal{N}$.
	The key idea here is that the computation of a single ReLU node $v(x_1, \dotsc, x_{d+k}) = y$, with weights $c_i$ and bias $b$ of $\mathcal{N}$,
	is described by the quantifier-free PA formula:
	$(c_1x_1 + \dotsb + c_{d+k}x_{d+k} + b \leq 0 \land 0=y) \lor (c_1x_1 + \dotsb + c_{d+k}x_{d+k} + b > 0 \land c_1x_1 + \dotsb + c_{d+k}x_{d+k} + b = y)$.
	Then, by nesting this construction iteratively from the last layer to the first layer of $\mathcal{N}$, and finally replacing $= y$ with $> 0$ in the
	atomic formulas related to the first output dimension of $\mathcal{N}$, we achieve the construction of $\varphi^\mathcal{N}$. This nesting and replacement 
	also ensures that $\varphi^\mathcal{N}$ includes only the variables $x_1, \dotsc, x_{d+k}$.
	
	Let $\Gamma \subseteq \Sigma$ such that $\Gamma = \{a_{i_1}, \dotsc, a_{i_h}\}$.
	Consider the ReLU network $\mathcal{N}$, the value matrix $V$, and the embedding $\iota$.
	We construct a quantifier-free PA formula $\varphi^{\mathcal{N},V}_\Gamma$ such that
	$\llbracket \varphi^{\mathcal{N},V}_\Gamma \rrbracket$ exactly includes those
	$(x_{i_1}, \dotsc, x_{i_h}) \in \mathbb{N}^{h}$ satisfying
	$\mathcal{N}(\iota(\$), \frac{1}{x_{i_1} + \dotsb + x_{i_h}}
	\sum_{j=1}^h x_{i_j} V \iota(a_{i_j}))_1 > 0$.
	To do so, we adjust the formula $\varphi^\mathcal{N}$ as described in the following.
	To account for the fixed input $\iota(\$)$, we replace each occurrence of
	$x_1$ to $x_{d}$ in $\varphi^{\mathcal{N}}$ by the respective entry of $\iota(\$)$.
	Furthermore, to handle the specific form of the input
	$\frac{1}{x_{i_1} + \dotsb + x_{i_h}} \sum_{j=1}^h x_{i_j} V \iota(a_{i_j})$,
	we first replace each occurrence of $x_{d+l}$ with $l \in \{1, \dotsc, k\}$
	in the already modified $\varphi^\mathcal{N}$ by:
	\begin{displaymath}
	(v_{l1} \iota(a_{i_1})_1 + \dotsb + v_{ld} \iota(a_{i_1})_d)x_{i_1} +
	\dotsb +
	(v_{l1} \iota(a_{i_h})_1 + \dotsb + v_{ld} \iota(a_{i_h})_d)x_{i_h},
	\end{displaymath}
	where $v_{lj}$ are the respective entries of $V$. Lastly, we replace each
	atomic constraint $c_1 x_{i_1} + \dotsb + c_h x_{i_h} \leq b$ in the
	adjusted formula with $(c_1-b) x_{i_1} + \dotsb + (c_h-b) x_{i_h} \leq 0$
	to adjust for the factor $\frac{1}{x_{i_1} + \dotsb + x_{i_h}}$ present in the input.

	Now, we define $\varphi_\Gamma$ as $\varphi^{\mathcal{N},V,\iota}_\Gamma$. 
	If $\$ \in \Gamma$, we adjust $\varphi^{\mathcal{N},V,\iota}_\Gamma$ slightly. 
	Assuming $\$ = a_{i_j} \in \Gamma$, we replace the variable $x_{i_j}$ with the constant 
	$1$ in $\varphi^{\mathcal{N},V,\iota}_\Gamma$. Given this construction, it is clear 
	that $\Parikh^{-1}(\llbracket \varphi_T \rrbracket) = \{w \in \Sigma^+ \mid T \text{ accepts } w\$\}$, as 
	$\varphi_T$ mimics the computation of $T$ for all possible attention situations $\Gamma$.

	For the inclusion $\QFPA\subseteq\AHATem[\le, U]$, we observe that $\QFPA\subseteq\SemiAlg[\le 1]$, and thus the inclusion follows from \cref{semialg-parametric}.
\end{proof}

\section{Counting properties expressible by other models}\label{app:semilinearity}
\OMIT{
\subsection{Permutation-invariant languages of simplified multicounter machines}\label{app:smca-semilinearity}
\newcommand{\bo}{\bm{o}}
\newcommand{\autM}{\mathfrak{M}}
\newcommand{\mask}{\operatorname{mask}}
Simplified multicounter machines were first introduced by \cite{WGY18} as a non Turing complete version of Minsky machines that still allow incrementing, decrementing, and (non-)zero tests of multiple counters. Before defining these machines, we first need some more notations: the \emph{masking function} $\mask\colon\Z^d\to\{0,1\}^d$ satisfies for each tuple $\bx\in\Z^d$
\[\text{for all }1\leq i\leq d\colon \mask(\bx)[i]=0 \iff \bx[i]=0\,.\]
Simplified multicounter machines can apply the following operations: incrementing ($+1$), decrementing ($-1$), resets (${}\cdot0$), and no-operations (${}\cdot1$). A vector $\bo\in\{+1,-1,{}\cdot0,{}\cdot1\}^d$ of operations induces the following function $\bo\colon\Z^d\to\Z^d$ with: for each $\bx\in\Z^d$ and $1\leq i\leq n$:
\begin{itemize}
  \item if $\bo[i]={}+1$ then $\bo(\bx)[i]=\bx[i]+1$,
  \item if $\bo[i]={}-1$ then $\bo(\bx)[i]=\bx[i]-1$,
  \item if $\bo[i]={}\cdot0$ then $\bo(\bx)[i]=0$, and
  \item if $\bo[i]={}\cdot1$ then $\bo(\bx)[i]=\bx[i]$.
\end{itemize}
A $d$-dimensional \emph{simplified multicounter machine} is a tuple $\autM=(Q,\Sigma,q_0,\delta,u,F)$ where $Q$ is a finite set of \emph{states}, $\Sigma$ is the \emph{input alphabet}, $q_0\in Q$ is the \emph{initial} state, $\delta\colon Q\times\Sigma\times\{0,1\}^d\to Q$ a \emph{transition function}, $u\colon\Sigma\to\{-1,+1,{}\cdot0,{}\cdot1\}^d$ a \emph{counter update function}, and $F\subseteq Q\times\{0,1\}^d$ a set of masked \emph{accepting configurations}.
The set of configurations of $\autM$ is the set $Q\times\Z^d$. For two configurations $(p,\bx),(q,\by)\in Q\times\Z^d$ and a letter $a\in\Sigma$ we write $(p,\bx)\xrightarrow{a}_\autM(q,\by)$ if $q=\delta(p,a,\mask(\bx))$, and $\by=u(a)(\bx)$. For a word $w\in\Sigma^*$ and configurations $c,d\in Q\times\Z^d$ we also write $c\xrightarrow{w}_\autM d$ if there are $a_1,a_2,\ldots,a_\ell\in\Sigma$ and configurations $c_0,c_1,\ldots,c_\ell\in Q\times\Z^d$ with $w=a_1a_2\cdots a_\ell$, $c_0=c$, $c_\ell=d$, and $c_{i-1}\xrightarrow{a_i}_\autM c_i$ for all $1\leq i\leq\ell$. A word $w\in\Sigma^*$ is \emph{accepted} by $\autM$ iff there is a configuration $(q,\bx)\in Q\times\Z^d$ with $(q_0,\bzero)\xrightarrow{w}_\autM(q,\bx)$ with $(q,\mask(\bx))\in F$. By $L(\autM)$ we denote the set of all accepted words of $\autM$.

\begin{lemma}\label{smcm-permutation-invariant-semilinear}
	If $L\subseteq\Sigma^*$ is permutation-invariant and accepted by a simplified multicounter machine.
	Then the Parikh image of $L$ is semilinear.
\end{lemma}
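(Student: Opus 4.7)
The plan is to show that $\Psi(L)$ is semilinear, i.e.\ definable in existential Presburger arithmetic. The strategy is to pass through a finite-state overapproximation $\autB$ of $\autM$ and then impose counter consistency as linear conditions, exploiting permutation invariance.

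First I would build an NFA $\autB$ over $\Sigma$ with state space $Q\times\{0,1\}^d$, whose transitions follow $\delta$ while nondeterministically guessing the new mask. From $(q,m)$ on reading $a$, $\autB$ moves to $(\delta(q,a,m),m')$ for every $m'$ that is \emph{locally consistent} with $u(a)$ and $m$: a reset forces $m'[i]=0$; a no-op forces $m'[i]=m[i]$; a $\pm 1$ update from $m[i]=0$ forces $m'[i]=1$; and a $\pm 1$ update from $m[i]=1$ leaves $m'[i]\in\{0,1\}$ free. Accepting states of $\autB$ are those matching $F$. Then $L(\autM)\subseteq L(\autB)$, and by Parikh's theorem $\Psi(L(\autB))$ is semilinear.

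The inclusion $L(\autM)\subseteq L(\autB)$ is typically strict, since $\autB$'s mask guesses need not correspond to actual counter values. To lift a trace $t$ of $\autB$ to a genuine run of $\autM$, I would decompose $t$ into \emph{$i$-phases} for each counter $i$: maximal intervals between two consecutive resets of counter $i$ (or bounded by the endpoints of $t$). Inside an $i$-phase, counter $i$ walks from $0$ via $+1$/$-1$/no-change updates, and the guessed mask records exactly when this walk is at $0$. I further split each $i$-phase at positions where the guessed mask becomes $0$ again; within each such sub-interval, the walk must return to $0$ and avoid $0$ in the interior. Using permutation invariance of $L$, it suffices to find \emph{some} ordering of $\bx$-many letters realizing a consistent accepting trace; a block arrangement of the form $(+1)^p(-1)^p$ with ``$=$''-letters placed at interior nonzero positions realizes the required pattern precisely when the numbers of $+1$- and $-1$-letters for counter $i$ in the sub-interval are equal. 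This is a linear condition on the letter counts.

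Summed over all $d$ counters and all sub-intervals, the consistency constraints become linear in the letter counts. Combining them with a Parikh-style pumping decomposition of accepting paths in $\autB$ --- which parameterizes trace shapes by a simple path plus cycle multiplicities --- yields an existential Presburger formula for $\Psi(L)$, giving semilinearity. The main obstacle will be showing that a \emph{single} ordering of letters can simultaneously realize the block arrangements for all $d$ counters, since each letter contributes to every counter's update. The idea is that once the per-counter per-sub-interval linear conditions are satisfied individually, the block patterns required by the different counters can be interleaved into a common ordering; checking this global compatibility is the technical core of the proof.
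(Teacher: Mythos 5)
Your proposal has a genuine gap, and it is exactly the one you flag yourself at the end: you never establish that a \emph{single} ordering of the $\bx$-many letters can simultaneously realize the required zero/nonzero pattern for all $d$ counters. This is not a routine "global compatibility check": each letter carries a fixed update for \emph{every} counter, the sub-interval boundaries of different counters need not align, and a letter that counter $i$ wants early (as a $+1$) may be one that counter $j$ wants late (as a $-1$). Worse, your argument overlooks that permuting letters to achieve the block arrangement $(+1)^p(-1)^p$ for one counter changes the mask sequence seen by \emph{all} counters at every position, and since $\delta(q,a,\mask(\bx))$ depends on the mask, it changes the control-state sequence as well --- so the "consistent accepting trace" of $\autB$ you started from may simply cease to be a run after the rearrangement. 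There is also a secondary issue: the number of sub-intervals (returns to zero) along a run is unbounded, so "summing the linear conditions over all sub-intervals" does not by itself produce a single existential Presburger formula; the combination with the path-plus-cycles decomposition of $\autB$ would need to be worked out carefully.

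The paper's proof avoids all of this by using permutation invariance \emph{once, at the very beginning}, to fix the ordering of the input: since $L$ is permutation-invariant, $\Psi(L)=\Psi(L\cap a_1^*\cdots a_m^*)$, and the restriction to $a_1^*\cdots a_m^*$ can be enforced in the finite control. On such an input, each counter's update is constant within each of the $m$ letter blocks (because updates depend only on the current letter), so every counter is reversal-bounded with at most $m-1$ reversals. Semilinearity then follows directly from Ibarra's theorem on reversal-bounded multicounter machines. I would encourage you to adopt this ordering-first use of permutation invariance: it dissolves the realizability question you were trying to answer, rather than answering it.
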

\begin{proof}[Proof sketch]
	Suppose $\Sigma=\{a_1,\ldots,a_m\}$. Note that since $L$ is permutation
	closed, it has the same Parikh image as $K=L\cap a_1^*\cdots a_m^*$.
	Moreover, $K$ is also accepted by some simplified multicounter machine: Checking that the input belongs to $a_1^*\cdots a_m^*$ can be done in the state. Now note that in a simplified multicounter machine,
	since every counter update depends only on the input letter, a simplified multicounter machine
	for the language $K$ is necessarily \emph{reversal-bounded}: This means, over the course of the run, each counter switches between \emph{incrementing}, \emph{decrementing} and \emph{zero-testing} at most $m-1$ times. However, it is a well-known result in automata theory that counter machines with reversal-bounded counters have semilinear Parikh images~\citep[Theorem 2.3]{ibarra1978reversal}.
\end{proof}

We would like to stress that the assumption of being permutation-invariant is
crucial in \cref{smcm-permutation-invariant-semilinear} is crucial: Without it,
simplified multi-counter machines can accept non-semilinear languages.

For example, take the language
\[ L:=\{\lta^n((\ltb\ltc)^n(\ltd\lte)^n)^m\ltf^n \mid n,m\in\N, m,n\ge 2\}. \]
Then $L$ is accepted by a simplified multi-counter machine: It counts up its
first counter while reading $\lta$'s and thus stores $n$ in it. Upon reading
$\ltb$, it decrements the first counter, and $\ltc$ increments the second
counter. Thus, after reading $(\ltb\ltc)^n$, the first counter is empty and the
second counter holds $n$. Then, $\ltd$ decrements the second counter, and
$\lte$ increments the first. Thus, after reading $(\ltd\lte)^n$, we are back at
holding $n$ in the first counter (and the second being empty). Finally, $\ltf$
just decrements the first counter.

However, the language $L$ does not have a semilinear Parikh image: Projecting away the components of the letters $\lta,\ltf$ yields the set of vectors
\[ S=\{(b,c,d,e)\in\N^4 \mid \exists m,n\in\N\colon m,n\ge 2,~b=c=d=e=4mn\}, \]
where $b,c,d,e$ are the entries corresponding to $\ltb,\ltc,\ltd,\lte$, resp.
However, $S$ is not semilinear: If we project to one of the components, we
obtain the set $\{4mn \mid m,n\ge 2\}$. If the latter were definable in
Presburger arithmetic, then so would $\{mn \mid m,n\ge 2\}$, but this is the
set of composite numbers, which cannot be semilinear.

Thus, we observe:
\begin{proposition}\label{smcm-not-semilinear}
	There is a simplified multi-counter machine that accepts a non-semilinear language.
\end{proposition}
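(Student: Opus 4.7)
The plan is to exhibit an explicit witness: a language $L$ accepted by a simplified multi-counter machine (SMCM) whose Parikh image is not semilinear. The design intuition is that while \cref{smcm-permutation-invariant-semilinear} forces semilinearity \emph{when the language is permutation-invariant}, dropping that assumption lets us exploit the word structure to force counters to loop back to a previously stored value, effectively performing multiplication. This is what kills semilinearity, since the natural arithmetic obstruction to semilinearity is the presence of a product $mn$ of two free parameters.

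First, I would define a candidate alphabet $\{\lta,\ltb,\ltc,\ltd,\lte,\ltf\}$ and the language
\[
  L \;=\; \{\lta^n\bigl((\ltb\ltc)^n(\ltd\lte)^n\bigr)^m\ltf^n \mid m,n\in\N,\; m,n\ge 2\}.
\]
I would then construct an SMCM $\autM$ that uses two counters $c_1,c_2$: reading $\lta$ increments $c_1$, reading $\ltb$ decrements $c_1$ and $\ltc$ increments $c_2$, reading $\ltd$ decrements $c_2$ and $\lte$ increments $c_1$, and reading $\ltf$ decrements $c_1$. The finite control, aided by the allowed zero tests on the $\mask$ of the counters, enforces that each $\ltb\ltc$-block (respectively $\ltd\lte$-block) is long enough to bring $c_1$ (respectively $c_2$) to zero exactly at the block boundary, and that the machine returns to $c_1=n,c_2=0$ before starting the next iteration of the outer $m$-loop. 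Finally, $\ltf$-reads must empty $c_1$, enforcing that the final $\ltf$-block has length exactly $n$. The state part of $\autM$ also checks that the input has the regular-expression shape $\lta^+((\ltb\ltc)^+(\ltd\lte)^+)^{\ge 2}\ltf^+$ with $m\ge 2$.

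Next, I would show that the Parikh image of $L$ is not semilinear. Projecting onto the coordinates $(b,c,d,e)$ gives
\[
  S \;=\; \{(b,c,d,e)\in\N^4 \mid \exists m,n\ge 2:\ b=c=d=e=2mn\}.
\]
Projecting further onto the first coordinate yields $T=\{2mn \mid m,n\ge 2\}$. If $S$ (hence $T$) were semilinear, then $T/2=\{mn \mid m,n\ge 2\}$ would be semilinear, i.e.\ expressible in Presburger arithmetic; but $T/2$ is exactly the set of composite numbers, and its complement (within $\N$) contains all primes, which is known not to be Presburger-definable. Since semilinear sets are closed under coordinate projection, this contradicts semilinearity of the Parikh image of $L$.

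The main obstacle I expect is verifying that $\autM$ really is a \emph{simplified} multicounter machine in the sense of \cref{app:smca-semilinearity}: counter updates must depend only on the current input letter, not on the current state, so all equality-checking between successive blocks must be delegated to zero-tests via the $\mask$ function feeding into $\delta$. I would therefore be careful that the transition function uses the $\mask$-vector to refuse leaving a block before the corresponding counter reaches zero, rather than letting the state pick different updates. Once this is confirmed, the proposition follows.
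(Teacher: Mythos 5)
Your proposal is correct and follows essentially the same route as the paper: the identical witness language $\{\lta^n((\ltb\ltc)^n(\ltd\lte)^n)^m\ltf^n \mid m,n\ge 2\}$, the same two-counter SMCM with letter-determined updates, and the same projection to a set of the form $\{kmn \mid m,n\ge 2\}$ whose semilinearity would make the composite numbers Presburger-definable. (Your constant $2mn$ for the individual letter counts is off --- each of $\ltb,\ltc,\ltd,\lte$ occurs $mn$ times --- but this has no bearing on the argument.)
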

}

\subsection{Semilinear counting properties}
A counting property $P \subseteq \N^d$ is said to be \defn{semilinear} if can be
defined as a boolean combination of inequalities over linear arithmetic 
expressions (over variables $x_1,\ldots,x_d$ and integer constants) and modulo
arithmetic expressions of the form $x_i \equiv a \pmod{b}$, where $a,b \in \N$ 
are fixed constants. In particular, semilinear counting properties cannot define
semialgebraic counting properties involving polynomials of degrees 2 or above.

It is also convenient to use quantifiers when defining semilinear sets. In
particular, they do not increase the expressive power since they can be
eliminated. This results in the logic called \emph{Presburger arithmetic (PA)},
which refers to the first-order theory of the structure $\langle \mathbb{N}; +,
0, 1, < \rangle$; see \cite{DBLP:journals/siglog/Haase18,DBLP:conf/fsttcs/000124}. 
\OMIT{
The \emph{quantifier-free} fragment of PA includes all PA formulas that do not 
contain any quantifiers; in other 
words, these are Boolean combinations of atomic formulas. We assume that all atomic formulas are of the form $c_1 x_1 + \dotsb + c_n x_n \leq b$ with $c_i, b \in \Q$, which we refer to 
as \emph{linear inequalities}. Additionally, we use common abbreviations such as $=$ or $>$.
Given a PA formula with $m$ free variables, meaning variables not bound by any quantifier, we denote by
$\llbracket \varphi \rrbracket$ the set of vectors in $\mathbb{N}^m$ that satisfy $\varphi$.
Let $\QFPA$ denote the class of languages $L \subseteq \Sigma^+$ for which there exists a
quantifier-free PA formula $\varphi$ with $|\Sigma|$ free variables such that 
$L=\{w\in\Sigma^+ \mid \Psi(w)\in\llbracket\varphi\rrbracket\}$.
}

\subsection{Permutation-invariant languages of LTL with counting}\label{app:ltl-semilinearity}
$\LTL[\Count]$ has the following syntax:
\begin{align*}
  \phi &::= a \mid t \leq t \mid \neg\phi \mid \phi\lor\phi \mid \X\phi \mid \phi \U \phi\\
  t    &::= k \mid k \cdot \overleftarrow{\#\phi} \mid k \cdot \overrightarrow{\#\phi} \mid t + t
\end{align*}
where $a\in\Sigma$ and $k\in\Z$. Next we define the semantics of $\LTL[\Count]$. For any word $w=a_1a_2\cdots a_\ell\in\Sigma^*$ with $a_1,a_2,\ldots,a_\ell\in\Sigma$, for each $1\leq i\leq\ell$, and each formula $\phi\in\LTL[\Count]$ we write $w,i\models\phi$ if the formula $\phi$ is satisfied in $w$ at position $i$. Formally, this relation is defined inductively as follows:
\begin{itemize}
  \item $w,i\models a$ (for $a\in\Sigma$) iff $a_i=a$,
  \item $w,i\models \neg\phi$ iff $w,i\not\models \phi$,
  \item $w,i\models \phi\lor\psi$ iff $w,i\models\phi$ or $w,i\models\psi$,
  \item $w,i\models \X\phi$ iff $i<\ell$ and $w,i+1\models\phi$,
  \item $w,i\models \phi\U\psi$ iff there is $i\leq j\leq k$ with $w,j\models\psi$ and for all $i\leq k<j$ we have $w,k\models\phi$,
  \item $w,i\models t_1\leq t_2$ iff $\sem{t_1}(w,i)\leq\sem{t_2}(w,i)$ where the semantics $\sem{t}\colon\Sigma^*\times\N\to\Z$ of a term $t$ is defined as follows: $\sem{k}(w,i)=k$, $\sem{t_1+t_2}(w,i)=\sem{t_1}(w,i)+\sem{t_2}(w,i)$, $\sem{k\cdot \overleftarrow{\#\phi}}=k\cdot|\{1\leq j<i\mid w,j\models\phi\}|$, and $\sem{k\cdot \overrightarrow{\#\phi}}=k\cdot|\{i\leq j\leq\ell\mid w,j\models\phi\}|$.
\end{itemize}

Our main result on $\LTL[\Count]$ is the following:
\begin{restatable}{theorem}{ltlCountSemilinear}\label{ltl-count-semilinear}
  Every permutation-invariant language definable in $\LTL[\Count]$ has a semilinear Parikh image.
\end{restatable}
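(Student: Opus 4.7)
Write $\Sigma = \{\lta_1, \ldots, \lta_m\}$ and let $\phi \in \LTL[\Count]$ define a permutation-invariant language $L$. Since $L$ is permutation-invariant, $L$ and $K := L \cap \lta_1^* \cdots \lta_m^*$ share the same Parikh image, so it suffices to show that $\{(n_1,\ldots,n_m) \in \N^m : \lta_1^{n_1}\cdots\lta_m^{n_m} \in K\}$ is Presburger-definable; by the standard equivalence between Presburger-definable and semilinear sets, this gives semilinearity. The idea is that on sorted words the ordering is fully encoded by the Parikh vector, so every LTL operator, including counting, should be expressible over $(\vec n, i)$ in Presburger arithmetic.

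I would prove, by mutual structural induction on $\LTL[\Count]$ formulas and terms, that for every subformula $\psi$ there is a Presburger formula $\Psi(\vec n, i)$ holding exactly when $1 \le i \le n_1+\cdots+n_m$ and $\lta_1^{n_1}\cdots\lta_m^{n_m}, i \models \psi$, and that for every term $t$ there is a Presburger formula capturing the graph $v = \sem{t}(\lta_1^{n_1}\cdots\lta_m^{n_m}, i)$. The routine cases: the letter predicate $\lta_k$ becomes the linear constraint $n_1+\cdots+n_{k-1} < i \le n_1+\cdots+n_k$; Boolean connectives are trivial; $\X\psi$ is a shift $i \mapsto i+1$ together with $i < n_1+\cdots+n_m$; $\psi_1 \U \psi_2$ expands to an existential quantifier over the witness position together with a bounded universal quantifier, both absorbed by Presburger; comparisons $t_1 \le t_2$ and additive combinations of terms are direct once the graphs of $t_1, t_2$ are available.

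The main obstacle, and the only delicate step, is the counting terms $\overleftarrow{\#\psi}$ and $\overrightarrow{\#\psi}$. By induction we have a Presburger $\Psi(\vec n, j)$ and must show that $c(\vec n, i) := |\{j : 1 \le j < i \wedge \Psi(\vec n, j)\}|$ is Presburger-definable in $(\vec n, i)$. My plan is to put $\Psi(\vec n, j)$ into disjunctive normal form over linear inequalities and modular constraints in $(\vec n, j)$ via quantifier elimination. For each conjunct, the set of $j$ satisfying it with $\vec n$ fixed is the intersection of a Presburger-definable interval of $j$'s (from the linear part) with an arithmetic progression whose period is bounded by the LCM of the moduli appearing in $\Psi$, a constant independent of $\vec n$ (from the modular part). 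The cardinality of such a bounded arithmetic progression in $[1, i-1]$ is piecewise-linear in the endpoints and hence Presburger in $(\vec n, i)$; disjunctions are then combined by inclusion--exclusion, which preserves Presburger-definability. Scalar multiples $k \cdot \overleftarrow{\#\psi}$ with $k \in \Z$ are handled by the standard positive/negative decomposition.

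With this machinery in place, the root formula $\phi$ evaluated at position $1$ gives a Presburger formula $\Phi(\vec n, 1)$, so the Parikh image of $L$ equals $\{\vec n \in \N^m : \Phi(\vec n, 1)\}$ and is therefore semilinear. I expect the counting case to be the only technically substantial step; everything else is a direct simulation of LTL semantics inside Presburger arithmetic.
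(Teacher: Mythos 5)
Your proposal is correct, but it takes a genuinely different route from the paper. Both arguments begin the same way: permutation-invariance lets you replace $L(\phi)$ by $L(\phi)\cap \lta_1^*\cdots \lta_m^*$, where position $i$ of the sorted word is fully determined by the Parikh vector. From there the paper goes automata-theoretic: it builds, for each subformula, a nondeterministic Parikh \emph{transducer} that decorates positions with the truth value of that subformula, handles the counting atoms by observing that on bounded words the set of positions satisfying such an atom decomposes into three intervals (guessed nondeterministically and verified via counters and semilinear target sets), cascades the transducers, and concludes from the fact that Parikh automata have semilinear Parikh images. You instead perform a direct logic-to-logic interpretation into Presburger arithmetic: a mutual induction producing formulas $\Psi(n_1,\ldots,n_m,i)$ for subformulas and graph formulas for terms, with the counting terms handled by quantifier elimination, counting an arithmetic progression of bounded period inside an interval with Presburger-definable endpoints, and inclusion--exclusion over the DNF. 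Your identification of the counting terms as the only delicate step is accurate, and the arithmetic-progression count is indeed Presburger-definable, so the argument goes through (modulo the routine positive/negative decomposition for integer-valued terms, which you note). What each approach buys: the paper's transducer cascade avoids quantifier elimination and the explicit progression-counting, and is modular in a way that reuses standard Parikh-automaton closure properties; your version is more self-contained (no Parikh automata needed), does not rely on the paper's convexity observation for counting atoms, and directly yields an explicit, effective Presburger description of the Parikh image.
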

Before we can prove \cref{ltl-count-semilinear}, we need a few more definitions. For an alphabet $\Sigma$ write $\Sigma_\varepsilon$ for the set $\Sigma\cup\{\varepsilon\}$. A ($d$-dimensional) \emph{Parikh automaton} is a tuple $\autA=(Q,\Sigma,\iota,\Delta,(C_q)_{q\in Q})$ where $Q$ is a finite set of \emph{states}, $\Sigma$ is the \emph{input alphabet}, $\iota\in Q$ is an \emph{initial} state, $\Delta\subseteq Q\times\Sigma_\varepsilon\times\N^d\times Q$ is a finite \emph{transition} relation, and $C_q\subseteq\N^d$ are semilinear \emph{target} sets. A word $w\in\Sigma^*$ is \emph{accepted} by $\autA$ if there are $a_1,a_2,\ldots,a_\ell\in\Sigma_\varepsilon$, states $q_0,q_1,\ldots,q_\ell\in Q$, and vectors $\bv_0,\bv_1,\ldots,\bv_\ell\in\N^d$ such that (i)~$q_0=\iota$ and $\bv_0=\bzero$, (ii)~for each $0\leq i<\ell$ there is a transition $(q_i,a_i,\bx_i,q_{i+1})\in\Delta$ with $\bv_{i+1}=\bv_i+\bx_i$, and (iii)~$\bv_\ell\in C_{q_\ell}$. The accepted language $L(\autA)$ of $\autA$ is the set of all words accepted by $\autA$. It is a well-known fact that for each Parikh automaton $\autA$ the accepted language $L(\autA)$ has a semilinear Parikh image. Observe that $0$-dimensional Parikh automata are essentially NFA and, hence, accept exactly the regular languages.

A \emph{Parikh transducer} is a Parikh automaton with input alphabet $\Sigma_\varepsilon\times\Gamma_\varepsilon$ where $\Sigma$ and $\Gamma$ are two alphabets. The accepted language $L(\autA)\subseteq\Sigma^*\times\Gamma^*$ of a Parikh transducer can also be seen as a map: if $(v,w)\in L(\autA)$ then we can see $v$ as the input and $w$ as the output of the transducer. Formally, for an input language $L\subseteq\Sigma^*$ a Parikh transducer computes the output $T_\autA(L)=\{w\in\Gamma^*\mid\exists v\in L\colon (v,w)\in L(\autA)\}$. If $L$ is accepted by a Parikh automaton then $T_\autA(L)$ is also accepted by a Parikh automaton. To see this, we can take the synchronized product of the Parikh automaton $\autB$ accepting $L$ and $\autA$ (i.e., $\autB$ reads the same letter from the input as $\autA$ in its first component). Accordingly, cascading of Parikh transducers is also possible, i.e., if $\autA$ and $\autB$ are Parikh transducers over $\Sigma_\varepsilon\times\Gamma_\varepsilon$ and $\Gamma_\varepsilon\times\Pi_\varepsilon$, we can also construct a Parikh transducer $\mathfrak{C}$ over $\Sigma_\varepsilon\times\Pi_\varepsilon$ computing $T_{\mathfrak{C}}=T_\autB\circ T_\autA$.

With the definition of Parikh automata and Parikh transducers we are now able to prove \cref{ltl-count-semilinear}.

\begin{proof}
  Let $\phi\in\LTL[\Count]$ be a formula such that the described language $L(\phi)$ is permutation-invariant. We will prove by induction on the structure of $\phi$ that the Parikh image of $L(\phi)$ (or actually a \emph{bounded} subset of $L(\phi)$) is semilinear. Here, a language $L\subseteq\Sigma^*$ is \emph{bounded} if there are letters $a_1,a_2,\ldots,a_n\in\Sigma$ with $L\subseteq a_1^*a_2^*\cdots a_n^*$. So, let $a_1,a_2,\ldots,a_n\in\Sigma$ be distinct letters with $\Sigma=\{a_1,a_2,\ldots,a_n\}$. Then $L(\phi)\cap a_1^*a_2^*\cdots a_n^*$ is clearly bounded and has the same Parikh image as $L(\phi)$.
  
  For each subformula $\psi$ of $\phi$ we construct a Parikh transducer that labels each position satisfying $\psi$. In the base case, we decorate each letter $a$ by $\bbb\in\{0,1\}^n$ where $\bbb[i]=1$ iff $a_i=a$. Note that this transducer handles all atomic formulas $a\in\Sigma$ at once. For $\psi=\chi_1\lor\chi_2$ we add the decoration $b\in\{0,1\}$ to each letter where $b=1$ iff one of the decorations corresponding to $\chi_1$ and $\chi_2$ is $1$. There are similar transducers (which do not introduce counters) for the cases $\psi=\neg\chi$, $\psi=\X\chi$, and $\psi=\chi_1\U\chi_2$. Note that applying these transducers to a bounded language always yields another bounded language.
  
  Now, consider a counting subformula, i.e. $\psi=\sum_{i=1}^{\ell_1} k_i\cdot \overleftarrow{\#\chi_i}+\sum_{i=\ell_1+1}^{\ell_2} k_i\cdot \overrightarrow{\#\chi_i}\leq k$. Observe that the set of positions satisfying $\psi$ is convex in the set of positions satisfying any $\chi_i$. This is true since we consider only a bounded input language. Hence, we can split the input word into three (possibly empty) intervals: (i) the positions at the beginning of the input that do not satisfy $\psi$, (ii) the positions where all positions satisfying a $\chi_i$ also satisfy $\psi$, and (iii) the positions at the end of the input that do not satisfy $\psi$. We describe in the following a Parikh transducer with $3\cdot\ell_2$ many counters - one for each of these three intervals and each formula $\chi_i$. The transducer guesses the three intervals (note that this is non-deterministic), counts positions satisfying a $\chi_i$ accordingly, decorates only the positions in the second interval labeled with a $\chi_i$ with $1$ (and everything else with a $0$), and validates in the end our choice of the intervals (via appropriate semilinear target sets ensuring that the equation in $\phi$ is not satisfied in the first and third interval and is satisfied in the second interval). Clearly, this all can be done in one (non-deterministic) Parikh transducer.
  
  Finally, we have a cascade of (Parikh) transducers decorating each position in a bounded input word with a Boolean value indicating whether $\phi$ holds in that position. If we use $a_1^*a_2^*\cdots a_n^*$ as input language for our transducers (note that this language is regular) and intersect the output with all words decorated with a $1$ in the first position, we obtain a Parikh automaton accepting exactly the language $L(\phi)\cap a_1^*a_2^*\cdots a_n^*$. Since Parikh automata accept only languages with semilinear Parikh image, we infer that $L(\phi)\cap a_1^*a_2^*\cdots a_n^*$ and, hence, $L(\phi)$ have a semilinear Parikh image.
\end{proof}



\section{Further experimental validation}
\label{app:further_experiments}
\begin{figure}[t]
    \centering
    \begin{minipage}{0.4\textwidth}
        \centering
        \begin{tabular}{cccc}
            \toprule
            $i,j$ & Val. Perf. & Test Perf. & Gen. Perf. \\
            \midrule
            1,3 & 0.016 & 0.02/0.99  & 0.03/0.99 \\
            3,2 & 0.002 & 0.003/0.99 & 0.60/0.93 \\
            3,3 & 0.001 & 0.002/0.99 & 2.26/0.85 \\
            4,2 & 0.001 & 0.001/0.99 & 0.26/0.96 \\
            5,1 & 0.004 & 0.004/0.99 & 0.03/0.99 \\
            \bottomrule
        \end{tabular}
    \end{minipage}
    \hfill
    \begin{minipage}{0.55\textwidth}
        \centering
        \begin{tikzpicture}
            \begin{axis}[
                width=\textwidth,
                height=0.6\textwidth,
                ylabel={Loss},
                xlabel={$k$},
                symbolic x coords={1,2,3,4,5},
                xtick=data,
                legend style={at={(0.5,1.2)}, anchor=north, draw=none, legend columns=-1},
                ymin=0.0005, ymax=10,
                ymode=log,
                ytick={1,0.1,0.01,0.001},
                yticklabels={$10^0$,$10^{-1}$,$10^{-2}$,$10^{-3}$},
                enlarge x limits=0.2,
                grid=major,
            ]
            \addplot+[mark=* , color=blue!50, thick, mark options={fill=blue!50}] coordinates {(1,0.016) (2,0.002) (3,0.001) (4,0.001) (5,0.004)};
            \addplot+[mark=square*, color=green!50, thick, mark options={fill=green!50}] coordinates {(1,0.02) (2,0.001) (3,0.001) (4,0.002) (5,0.004)};
            \addplot+[mark=triangle*, color=red!50, thick, mark options={fill=red!50}] coordinates {(1,0.03) (2,0.60) (3,2.1) (4,0.26) (5,0.03)};
            \legend{Val. Perf., Test Perf., Gen. Perf.}
            \end{axis}
        \end{tikzpicture}
    \end{minipage}
    \caption{Performance of softmax transformer classifiers for $L_{i,j}$ (for a selected set of $i$ and $j$ combinations).
                \textbf{Validation Performance (Val. Perf.)}: BCEWithLogitsLoss on validation data.
                \textbf{Test Performance (Test Perf.)}: BCEWithLogitsLoss and Accuracy (separated by /) on test data.
                \textbf{Generalization Performance (Gen. Perf.)}: BCEWithLogitsLoss and Accuracy (separated by /) on generalization test set.
                The y-axis uses a logarithmic scale to accommodate the different orders of magnitude in the results.}
    \label{fig:addexps}
\end{figure}

\revised{
In this section, we report additional experiments addressing a similar research question as posed in Section~\ref{sec:exps}, namely,
do softmax transformers perform well on formal languages with inherent non-linear counting properties? Therefore, we consider
the language
\begin{displaymath}
            L_{i,j} = \{a^m b^n c^{m^i n^j} \mid m,n \in \mathbb{N}^{\geq 1}\}
\end{displaymath}
for selected values of $i$ and $j$. Clearly, recognising this language requires non-linear counting capabilities. Moreover,
in contrast to $L_k$ (see Section~\ref{sec:exps}), this language poses a greater challenge in learning tasks due to its
structure (all $b$'s follow all $a$'s followed by all $c$'s) and larger alphabet size.

The experimental setup is identical to that presented in Section~\ref{sec:exps}.
The results are presented in Figure~\ref{fig:addexps} for five distinct combinations of $i$ and $j$.
Similar to our previous experiments, the table on the left shows the highest observed performance
on the validation dataset (first column) and the best performance on a balanced test dataset derived from 
the same distribution as the training and validation data (second column). This indicates that this dataset 
also contains only words of length up to 500. The final column represents another balanced test dataset
of words from length 501 to 1000, utilised to potentially reveal length generalisation performance.
The plot on the right visualises the results reported in the table.

We again observe very high performance of our trained softmax transformers on the in-distribution
test dataset (second column), which shares the same distribution as our training dataset. 
The performance generally remains high on the generalisation test set
(third column) as well. We witnessed a slight decrease compared to the results on the in-distribution
test in the case of $L_{3,3}$ (accuracy of 0.85). A general decrease 
in performance on longer inputs is expected and also witnessed in other studies (cf. \cite{framework}), but
it also indicates that focused studies are essential to reveal rigorous insights into 
the relationship between the expressibility of polynomial counting properties we established 
and their practical learnability.
}
%

\label{afterbibliography}
\newoutputstream{pagestotal}
\openoutputfile{main.pagestotal.ctr}{pagestotal}
\addtostream{pagestotal}{\getpagerefnumber{afterbibliography}}
\closeoutputstream{pagestotal}

\newoutputstream{todos}
\openoutputfile{main.todos.ctr}{todos}
\addtostream{todos}{\arabic{@todonotes@numberoftodonotes}}
\closeoutputstream{todos}

\end{document}